\theoremstyle{plain}
\newtheorem{proposition}{Proposition}
\newtheorem{lemma}{Lemma}
\newtheorem{corollary}{Corollary}
\theoremstyle{definition}
\newtheorem{definition}{Definition}
\newtheorem{rule_def}{Rule}
\theoremstyle{remark}
\newtheorem{remark}{Remark}
\begin{document}

%

%

\twocolumn[

\aistatstitle{Shape Arithmetic Expressions: Advancing Scientific Discovery Beyond Closed-Form Equations}

\aistatsauthor{Krzysztof Kacprzyk \And Mihaela van der Schaar}

\aistatsaddress{ University of Cambridge \And  University of Cambridge \\ The Alan Turing Institute} ]

\begin{abstract}
 Symbolic regression has excelled in uncovering equations from physics, chemistry, biology, and related disciplines. However, its effectiveness becomes less certain when applied to experimental data lacking inherent closed-form expressions. Empirically derived relationships, such as entire stress-strain curves, may defy concise closed-form representation, compelling us to explore more adaptive modeling approaches that balance flexibility with interpretability. In our pursuit, we turn to Generalized Additive Models (GAMs), a widely used class of models known for their versatility across various domains. Although GAMs can capture non-linear relationships between variables and targets, they cannot capture intricate feature interactions. In this work, we investigate both of these challenges and propose a novel class of models, Shape Arithmetic Expressions (SHAREs), that fuses GAM's flexible shape functions with the complex feature interactions found in mathematical expressions. SHAREs also provide a unifying framework for both of these approaches. We also design a set of rules for constructing SHAREs that guarantee transparency of the found expressions beyond the standard constraints based on the model's size.
\end{abstract}

\section{INTRODUCTION}
\label{sec:introduction}

Throughout the centuries, scientists have used mathematical equations to describe the world around us. From the groundbreaking work of Johannes Kepler on the laws of planetary motion in the 17th century to Albert Einstein's theory of relativity, mathematical equations have been instrumental in shaping the scientific landscape. They provide a concise representation of reality that is open to rigorous mathematical analysis. Recently, machine learning has been used to accelerate the process of discovering equations directly from data \citep{Schmidt.DistillingFreeFormNatural.2009}. 

\paragraph{Symbolic Regression} Symbolic regression (SR) is an area of machine learning that aims to construct a model in the form of a \textit{closed-form expression}. Such an expression is a combination of variables, arithmetic operations ($+, -, \times, \div$), some well-known functions (trigonometric functions, exponential, etc), and numeric constants. For instance, $3 \sin(x_1 + x_2) \times e^{2x_3^2}$. Such equations, if concise, are interpretable and well-suited to mathematical analysis. These properties have led to applications of SR in many areas such as physics \citep{Schmidt.DistillingFreeFormNatural.2009}, medicine \citep{Alaa.DemystifyingBlackboxModels.2019}, material science \citep{Wang.SymbolicRegressionMaterials.2019}, and biology \citep{Chen.RevealingComplexEcological.2019}. Traditionally Genetic Programming \citep{Koza.GeneticProgrammingMeans.1994} has been used for this task \citep{Bongard.AutomatedReverseEngineering.2007,Schmidt.DistillingFreeFormNatural.2009,pysr,Stephens.GplearnGeneticProgramming.2022}. Recently, this area attracted a lot of interest from the deep learning community. Neural networks have been used to prune the search space of possible expressions \citep{Udrescu.AIFeynmanPhysicsinspired.2020,Udrescu.AIFeynmanParetooptimal.2021} or to represent the equations directly by modifying their architecture and activation functions \citep{Martius.ExtrapolationLearningEquations.2017,Sahoo.LearningEquationsExtrapolation.2018}. A different approach is proposed by \cite{Biggio.NeuralSymbolicRegression.2021}, where a neural network is pre-trained using a curated dataset. A similar approach is employed by \cite{DAscoli.DeepSymbolicRegression.2022,Kamienny.EndtoendSymbolicRegression.2022}. Methods using deep reinforcement learning \citep{Petersen.DeepSymbolicRegression.2021} have also been proposed, as well as a hybrid of the two approaches \citep{Holt.DEEPGENERATIVESYMBOLIC.2023}. Symbolic regression is usually validated on synthetic datasets with closed-form ground truth equations \citep{Udrescu.AIFeynmanPhysicsinspired.2020,Petersen.DeepSymbolicRegression.2021,Biggio.NeuralSymbolicRegression.2021}. However, as we investigate in Section \ref{sec:issues_sr}, closed-form functions are often inefficient in describing some relatively simple relationships. They either fit the data poorly or produce overly long expressions. They are also not compatible with categorical variables.

\paragraph{Generalized Additive Models} Generalized Additive Models (GAMs) \citep{hastie1986generalized, Lou.IntelligibleModelsClassification.2012} are widely used transparent methods. They model the relationship between the features $x_i$ and the label $y$ as
\begin{equation}
g(y) = f_1(x_1) + \ldots + f_n(x_n)
\end{equation}
where $g$ is called the \textit{link functions} and the $f_k$'s are called \textit{shape functions}. These models allow arbitrary complex shape functions but exclude more complicated interactions between the variables. They are deemed transparent as each function $f_k$ can be plotted; thus, the contribution of $x_k$ can be understood. Initially, splines and other simple parametric functions were used as shape functions. In recent years, many different classes of functions were proposed, including tree-based \citep{Lou.IntelligibleModelsClassification.2012}, deep neural networks \citep{Agarwal.NeuralAdditiveModels.2021,Radenovic.NeuralBasisModels.2022} or neural oblivious decision trees \citep{Chang.NODEGAMNeuralGeneralized.2022}. GAMs have also been extended to include pairwise interactions (known as GA$^2$M) that can be visualized using heat maps \citep{Lou.AccurateIntelligibleModels.2013}. The main disadvantage of GAMs is their inability to model more complicated interactions, for instance, $\frac{x_1 x_2}{x_3}$ (see Section \ref{sec:issues_gams}).

\paragraph{Transparency of Closed-Form Expressions} 
A model is considered \textit{transparent} if by itself it is understandable---a human understands its function \citep{BarredoArrieta.ExplainableArtificialIntelligence.2020}. The transparency of symbolic regression can be compromised if the found expressions become too complex to comprehend. In many scenarios, an arbitrary closed-form expression is unlikely to be considered transparent. Most of the current works limit the complexity of the expression by introducing a constraint based on, e.g., the number of terms \citep{Stephens.GplearnGeneticProgramming.2022}, the depth of the expression tree \citep{pysr}, or the description length \citep{Udrescu.AIFeynmanParetooptimal.2021}. Although these metrics often correlate with the difficulty of understanding a particular equation, size does not always reflect the equation's complexity as it does not focus on its semantics. Some recent works introduce a recursive definition of complexity that takes into account the type and the order of operations performed \citep{Vladislavleva.OrderNonlinearityComplexity.2009,Kommenda.ComplexityMeasuresMultiobjective.2015}. Although they are a step in the right direction, they are not grounded in how the model will be analyzed, and thus, it is not clear if they capture how comprehensible the model is (further discussion in Appendix \ref{app:discussion}).

\paragraph{Contributions and Outline} 
In Section \ref{sec:issues}, we investigate the limitations of SR and GAMs. In Section \ref{sec:shares}, we introduce a novel class of models called  \textbf{SH}ape \textbf{AR}ithmetic \textbf{E}xpressions (SHAREs) that combine GAM's flexible shape functions with the complex feature interactions found in closed-form expressions thus providing a unifying framework for both approaches. In Section \ref{sec:SHAREs_transparency}, we introduce a new kind of \textit{rule-based} transparency that goes beyond the standard constraints based on the model’s size and is grounded in the way the model is analyzed. We also investigate the theoretical properties of transparent SHAREs. Finally, we demonstrate their effectiveness through experiments in Section \ref{sec:experiments}.

\section{LIMITATIONS OF CURRENT APPROACHES}
\label{sec:issues}

\subsection{Symbolic Regression Struggles With Expressions That Are Not Closed-Form.}
\label{sec:issues_sr}

Symbolic regression excels in settings where the ground truth is a closed-form expression \citep{Udrescu.AIFeynmanPhysicsinspired.2020}. However, its effectiveness becomes less certain when applied to scenarios with no underlying closed-form expressions. Some phenomena do not have a closed-form expression (e.g., non-linear pendulum), and many functions in physics are determined experimentally rather than derived from a theory and are not inherently closed-form (e.g., current-voltage curves, drag coefficient as a function of Reynolds number, phase transition curves). This is even more relevant in life sciences, where the complexity of the studied phenomena makes it more difficult to construct theoretical models. We claim symbolic regression struggles to find compact expressions for certain relatively simple univariate functions.

\textbf{Example: Stress-Strain Curves}
To illustrate our point, we try to fit a symbolic regression model to an experimentally obtained stress-strain curve. We use data of stress-strain curves in steady-state tension of aluminum 6061-T651 at different temperatures obtained by \cite{Aakash.StressstrainDataAluminum.2019}. Figure \ref{fig:sample_stress_strain_curves} shows a sample of these curves. These functions are relatively simple as they can be divided into a few interpretable segments representing different behaviors of the material. The first part is linear and corresponds to elastic deformation. It ends at a point called yield strength. The rest corresponds to plastic deformation. For aluminum at 20$^{\circ}$C, we can distinguish the part of the curve when the stress increases called strain hardening. It achieves a maximum called the ultimate strength, and then it starts decreasing in a process called necking until fracture.

\begin{figure}[h!]
\centering
  \includegraphics[trim={0cm 0cm 0cm 0cm},clip,scale=0.4]{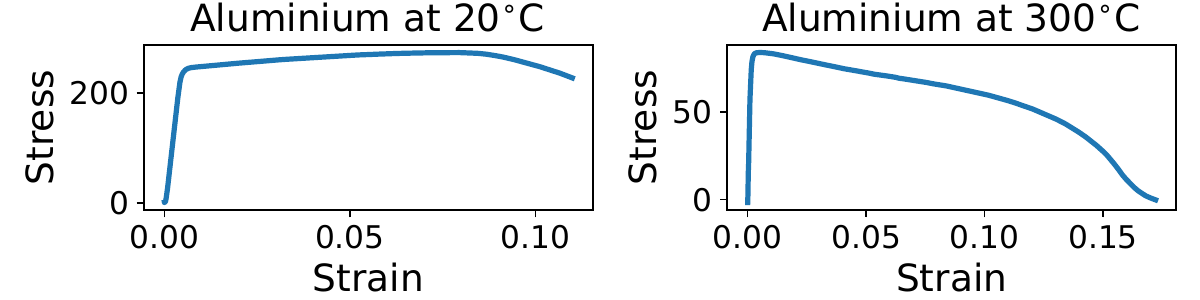}
  \caption{\small Stress-strain curves of aluminum at different temperatures}
  \label{fig:sample_stress_strain_curves}
\end{figure}

We use a symbolic regression library PySR \citep{pysr} to fit the stress-strain curve of aluminum at 300$^{\circ}$C. We fit the model and present several of the found expressions in Table \ref{tab:issues_sr_results}. The size of a closed-form expression is defined as the number of terms in its representation. For instance, $\sin(x+1)$ has complexity $4$ as it contains four terms: $\sin$, $+$, $x$, and $1$. We can see that small programs do not fit the data well. A good fit is achieved only by bigger expressions. However, such expressions are much less comprehensible when analyzed based on their symbolic representation, and thus, their utility is diminished.

\begin{table}[h]
\caption{\small A few of the equations discovered by Symbolic Regression when fitted to the stress-strain curve of aluminum at 300$^{\circ}$C. Equations in the last three rows are too long to fit in the table. We reproduce them below the table}
\vspace{2mm}
\label{tab:issues_sr_results}
\centering
\resizebox{\linewidth}{!}{
\begin{tabular}{lcc}
\toprule
Equation & Size & $R^2$ score \\
\midrule
$y = 63.3 e^{-x}$ & 4 & 0.163 \\
$y = 78.8 - 285x$ & 5 & 0.529 \\
$y = 74.9 \cos{\left(7.78 x \right)}$ & 6 & 0.733 \\
$ y= 71.2 \cos{\left(\frac{x}{x - 0.277} \right)}$ & 8 & 0.750 \\
$ y= 147 \cos{\left(8.58 x - 0.429 \right)} - 71.5$ & 10 & 0.770 \\

\textit{Equation \ref{eq:stress_strain_0}} & 11 & 0.836 \\
\textit{Equation \ref{eq:stress_strain_1}} & 15 & 0.933 \\
\textit{Equation \ref{eq:stress_strain_2}} & 18 & 0.970 \\

\bottomrule
\end{tabular}}
\begin{equation}
\label{eq:stress_strain_0}
\begin{split}
&y = - 428 x + 428 \cos{\left(0.0711 \log{\left(x \right)} \right)} - 324
\end{split}
\end{equation}
\begin{equation}
\label{eq:stress_strain_1}
\begin{split}
&y = 428 \cos{\left(3.31 x - 0.0751 \log{\left(1.16 x \right)} \right)} - 320
\end{split}
\end{equation}
\begin{equation}
\label{eq:stress_strain_2}
\begin{split}
y = 168& \cos(\left(\left(7.23 - \cos{\left(e^{- 421 x} \right)}\right) \left(x - 2.03\right) \right) \\
&- 88.1
\end{split}
\end{equation}
\end{table}

\subsection{GAMs Cannot Model Complex Interactions}
\label{sec:issues_gams}
\begin{table*}[t]
\caption{\small Performance of additive models (GAM and GA$^2$M) compared to a full capacity model (XGBoost) on datasets from the Feynman Symbolic Regression Database containing complex variable interactions. We show the mean $R^2$ score and a standard deviation in the brackets over 10 cross-validation splits.}
\vspace{2mm}
\label{fig:issues_gams_results}
\centering
\begin{tabular}{llllll}
\toprule
Eq. Number & Equation & GAM & GA${^2}$M & XGBoost \\
\midrule
\small
I.6.20b & $f=e^{-\frac{(\theta-\theta_1)^2}{2\sigma^2}}/\sqrt{2\pi\sigma^2}$ &  0.731 (.010) &  0.896 (.004) & 0.997 (.000) \\ 
I.8.14 & $d=\sqrt{(x_2-x_1)^2+(y_2-y_1)^2}$ &  0.229 (.011) &  0.966 (.000) & 0.989 (.000) \\ 
I.12.2 & $F=\frac{q_1 q_2}{4 \pi \epsilon r^2}$ &  0.676 (.011) &  0.950 (.003) & 0.993 (.001) \\ 
I.12.11 & $F=q(E_f+Bv\sin(\theta))$ &  0.675 (.004) &  0.955 (.001) & 0.996 (.000) \\ 
I.18.12 & $\tau=rF\sin(\theta)$ &  0.760 (.002) &  0.981 (.000) & 0.999 (.000) \\ 
I.29.16 & $x=\sqrt{x_1^2+x_2^2-2x_1 x_2 \cos(\theta_1 - \theta_2)}$ &  0.298 (.007) &  0.902 (.002) & 0.983 (.001) \\ 
I.32.5 & $P=\frac{q^2 a^2}{6\pi\epsilon c^3}$ &  0.444 (.015) &  0.835 (.009) & 0.988 (.001) \\ 
I.40.1 & $n=n_0 e^{-\frac{magx}{k_b T}}$ &  0.736 (.003) &  0.899 (.003) & 0.981 (.001) \\ 
II.2.42 & $P=\frac{\kappa(T_2-T_1)A}{d}$ &  0.615 (.006) &  0.937 (.002) & 0.990 (.000) \\ 
\bottomrule
\end{tabular}
\end{table*}
The main disadvantage of GAMs is that they are poor at modeling more complicated, non-additive interactions (involving 3 or more variables). Such interactions occur frequently in real life. For instance, many equations from physics involve multiplying a few variables together. To illustrate this point, we choose a few simple equations from the Feynman Symbolic Regression Database \citep{Udrescu.AIFeynmanPhysicsinspired.2020} and compare the performance of GAMs and GA$^2$Ms with a black-box machine learning model. We implement GAMs and GA$^2$Ms using Explainable Boosted Machines (EBMs) \citep{Lou.IntelligibleModelsClassification.2012,Lou.AccurateIntelligibleModels.2013} and choose XGBoost \citep{Chen.XGBoostScalableTree.2016} for a black-box model.

\paragraph{Choice of Equations} We choose equations so that they represent a variety of non-additive interactions between variables (see Table \ref{fig:issues_gams_results}).  Equations I.8.14 and I.29.16 describe the Euclidean distance in two dimensions and the Law of Cosines. Both of them involve a square root of a sum of terms. Equations I.12.2, I.18.12, I.32.5, II.2.42 can be used to describe an electric force between charged bodies, a torque, a rate of radiation of energy, and a heat flow. All of them are either products or fractions of products. Equations I.6.20b and I.40.1 describe a Gaussian distribution and a particle density. They both contain exponential functions. Lastly, equation I.12.11 describes a Lorentz force via a sum of products, one of which contains a trigonometric function.

\paragraph{Results} We report the results in Table \ref{fig:issues_gams_results}. The performance of GAMs is much lower than the performance of a full-capacity model (whose $R^2$ score is close to 1.0 as no noise was added to the dataset). The gap between GAM and XGBoost is partially closed by adding pairwise interactions in GA$^2$Ms. This dramatically improves the score in some cases (e.g., equation I.8.14) but still underperforms in others (e.g., equation I.32.5). 
It is important to note that pairwise interactions decrease the comprehensibility of the model. In particular, 2D heatmaps are more challenging to understand than plots of univariate functions, and the individual shape functions cannot be analyzed independently. As the shape functions have overlapping sets of arguments, we may have to analyze many shape functions at the same time to understand the model.

\section{SHAPE ARITHMETIC EXPRESSIONS}
\label{sec:shares}

In this section, we introduce a new type of machine learning model that connects symbolic regression's ability to model interactions with GAM's power of efficiently describing univariate functions by plots. This new family of models addresses the issues of both GAMs and symbolic regression that we discussed in the previous section.

Inspired by the GAM literature, we define a set of \textit{shape functions} $\mathcal{S}$, where each $s\in\mathcal{S}$ is a univariate function $s:\mathbb{R}\rightarrow \mathbb{R}$. This might be, for instance, a set of cubic splines or univariate neural networks. Let $\mathcal{B} = \{+,-,\div,\times\}$ be a set of binary operations. Let us denote real variables as $x_i$. We introduce \textit{Shape Arithmetic Expression} (SHARE) as a mathematical expression that consists of a finite number of shape functions, binary operations, variables, and numeric constants. For instance, see Equation \ref{eq:sample_share}, where $s_1,s_2,s_3 \in \mathcal{S}$ are the shape functions and need to be plotted next to the equation to understand the whole model. 

\begin{equation}
\label{eq:sample_share}
s_1(x_4 s_2(x_2)) + \frac{x_1}{s_3(x_3) - 2.3}
\end{equation}

\begin{remark}
\label{rem:every_gam}
Any GAM is an example of a SHARE. If we choose $\mathcal{S}$ to be a set of some well-known functions (e.g., $\mathcal{S}=\{\sin, \cos, \exp, \log \}$) then closed-form expression can also be considered SHAREs. In general, however, $\mathcal{S}$ is supposed to be a flexible family of functions that are fitted to the data and are meant to be understood visually.
\end{remark}

Formally, we represent SHAREs as expression trees (types of graphs) where each node is either a binary operation $b\in\mathcal{B}$ (with two children), a univariate function $s\in\mathcal{S}$ (with one child), a variable or a numeric constant (as leaves). Equation \ref{eq:sample_share} represented as a tree can be seen in Figure \ref{fig:SHARE_tree}. We borrow the terminology from SR literature and define the size of a SHARE as the number of nodes in its expression tree. The depth of a SHARE is defined as the depth of its expression tree.

\begin{figure}[h!]
\centering
\includegraphics[trim={0cm 6.9cm 17cm 0.0cm},clip,scale=0.35]{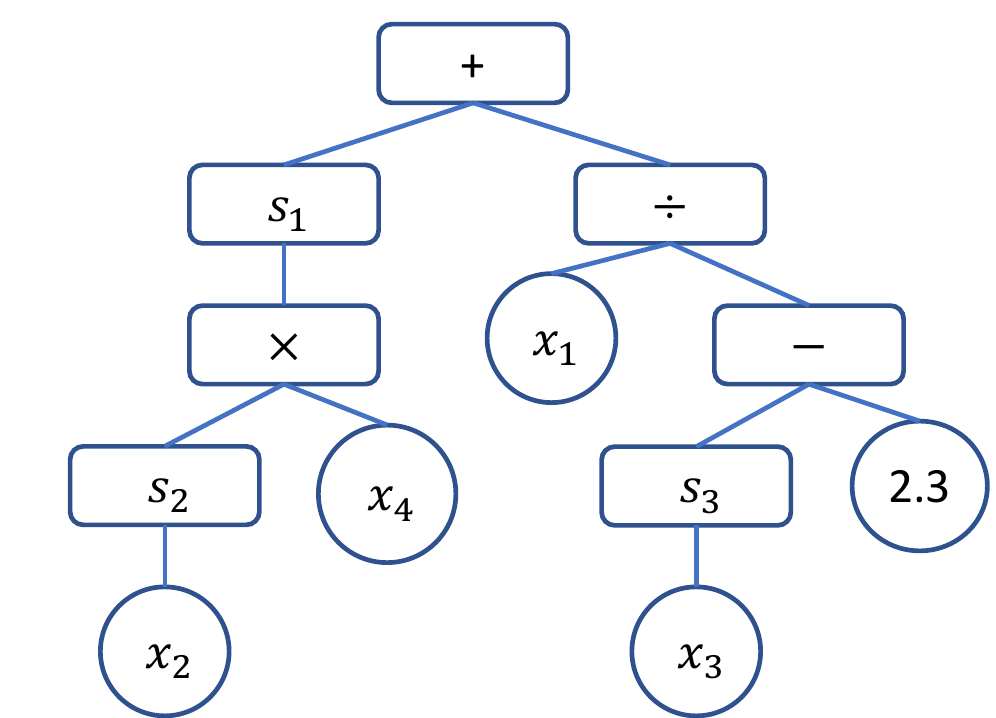}
\caption{\small Shape Arithmetic Expression represented as a tree.}
\label{fig:SHARE_tree}
\end{figure}

\paragraph{Why Univariate Functions?} We decided to use only univariate functions for two reasons: they are easy to understand, and they are sufficient. Firstly, they are easy to comprehend because they can always be plotted. While analyzing them, we have to keep track of only one variable, and we can characterize them using monotonicity (i.e., where the function is increasing or decreasing). Univariate functions are also much easier to edit in case we want to fix the model. Secondly, the Kolmogorov–Arnold representation theorem \citep{kolmogorov1957representation} states that for any continuous function $f:[0,1]^n \rightarrow \mathbb{R}$, there exist univariate continuous functions $g_q, \phi_{p,q}$ such that
\begin{equation}
    f(x_1,\ldots,x_n) = \sum_{q=0}^{2n} g_q\left(\sum_{p=1}^n \phi_{p,q}(x_p) \right)
\end{equation}

That means in principle, for expressive enough shape functions, SHAREs should be able to approximate any continuous function. However, SHAREs of that form would not necessarily be very transparent. We discuss the transparency of SHAREs in the next section.

\section{TRANSPARENCY}
\label{sec:SHAREs_transparency}

As explained in Section \ref{sec:introduction}, the transparency of symbolic regression can be compromised if the found expressions become too complex to comprehend. In many scenarios, an arbitrary closed-form expression is unlikely to be considered transparent. Note that any fully connected deep neural network with sigmoid activation functions is technically a closed-form expression. As SHAREs extend SR, they inherit the same problem. Current works introduce constraints that are not grounded in how the model will be analyzed. Although they are correlated with the difficulty of understanding the model, they are not based on any
assumptions of how the model is actually understood. Therefore, it is unclear whether they capture how comprehensible the model is. That includes constraints based on model size \citep{Stephens.GplearnGeneticProgramming.2022,pysr,Udrescu.AIFeynmanParetooptimal.2021} and even recent semantic constraints \citep{Vladislavleva.OrderNonlinearityComplexity.2009,Kommenda.ComplexityMeasuresMultiobjective.2015} (further discussion on SR constraints in Appendix \ref{app:discussion}).

\paragraph{Understanding by Decomposing: Rule-based Transparency} We approach the problem more systematically. Motivated by research on human understanding and problem solving \citep{Newell.ElementsTheoryHuman.1958, Simon.ArchitectureComplexity.1962, Navon.ForestTreesPrecedence.1977,Simon.SciencesArtificial.1996}, we assume that in certain scenarios \textit{understanding a complex expression involves decomposing it into smaller expressions and understanding them and the interactions between them}. Thus, the model can be decomposed into simpler terms and understood from the ground up---provided the expressions remain transparent throughout. This is in agreement with recent research in XAI that highlights \textit{decomposability} as a crucial factor for transparency, enabling more interpretable and explainable machine learning methods \citep{BarredoArrieta.ExplainableArtificialIntelligence.2020}. Thus, we define transparency implicitly by proposing two general rules for building machine learning models in a \textit{transparency-preserving} way, and we justify why they may be sufficient for achieving transparency in certain scenarios. These rules, in turn, allow us to define a subset of transparent SHAREs. Note, we use $\circ$ to denote standard function composition, i.e., $(f\circ g)(x) = f(g(x))$.

\begin{rule_def}[Univariate composition]
\label{rule:1}
Let $s$ be any univariate function. $s(x_i)$ is transparent, where $x_i$ is any variable. If $f$ is transparent then $s\circ f$ is also transparent.
\end{rule_def}

\begin{rule_def}[Disjoint binary operation]
\label{rule:2}
Let $b\in\mathcal{B}$ be a binary operation. If $f$ and $g$ are transparent and have disjoint sets of arguments, then $b\circ (f,g)$ is also transparent.
\end{rule_def}

\paragraph{Rationale for Rule 1} Let $s$ be any univariate function. Then $s(x_i)$ is transparent because we can visualize it and create a mental model of its behavior. Let us now consider a transparent function $f$. As it is transparent, we should have a fairly good understanding of the properties of $f$. For instance, what range of values it attains or whether it is monotonic for certain subsets of the data. As we can visualize $s$, it is reasonable to expect that we can infer these properties about $s \circ f$ as well. We can analyze $s$ and $f$ separately and then use that knowledge to analyze $s \circ f$.

\paragraph{Rationale for Rule 2} Let $b\in\mathcal{B}$ be a binary operation and let $f$ and $g$ be transparent functions with non-overlapping sets of arguments. As these functions are transparent, we can understand their various properties. As they do not have any common variables, they act independently. Thus, we can combine them using the binary operation $b$ and directly use the previous analysis to understand the new model $b\circ (f,g)$. Thus, it is considered transparent. An example of a property that conforms to this rule is the image of the function. If we know the images of two functions as two intervals and their sets of arguments are disjoint, then we can straightforwardly calculate the image of any combination of these functions (i.e., sum, difference, product, or ratio) using interval arithmetic. This is not possible if the variables overlap---interval arithmetic can only guarantee us a superset of the function's image. See Appendix \ref{app:discussion} for a detailed discussion.

\paragraph{Disjoint Binary Operations in Physics} Although Rule 2 may seem like a strong constraint, many common closed-form equations used to describe natural phenomena can be constructed by following the two rules. In particular, 86 out of 100 equations from Feynman Symbolic Regression Database \citep{Udrescu.AIFeynmanPhysicsinspired.2020} can be expressed in that form. Thus, in many cases, the space of transparent models should be rich enough to find a good fit.

\paragraph{Restricting the Search Space} The current definition of SHAREs contains certain redundancies. For instance, it allows for a direct composition of two shape functions. This unnecessarily complicates the model as the composition of two shape functions is approximately just another shape function (given that the class of shape functions is expressive enough). As any binary operation applied to a function and a constant can be interpreted as applying a linear function, we can remove the constants without losing the expressivity of SHAREs (given that the shape functions can approximate linear functions).

We can now use these two rules and the above observations to define transparent SHAREs.

\begin{definition}
\label{def:transparent_SHARE}
A transparent SHARE is a SHARE that satisfies the following criteria:
\begin{itemize}[noitemsep,nolistsep,leftmargin=*]

\item Any binary operator is applied to two functions with disjoint sets of variables.
\item The argument of a shape function cannot be an output of another shape function, i.e., $s_1(s_2(x))$ is not allowed.
\item It does not contain any numeric constants.
\end{itemize}
\end{definition}

\begin{remark}
By this definition, any GAM is a transparent SHARE. This is consistent with the fact that GAMs are generally considered transparent models \citep{hastie1986generalized,Caruana.IntelligibleModelsHealthCare.2015}.
\end{remark}

Transparent SHAREs have several useful properties. The following proposition demonstrates that there is no need to arbitrarily limit the size of the expression tree (as might be the case for many SR algorithms) as the depth and the number of nodes of a transparent SHARE are naturally constrained.

\begin{proposition}
\label{prop:properties}
Let $f:\mathbb{R}^n \rightarrow \mathbb{R}$ be a transparent SHARE. Then
\begin{itemize}[noitemsep,nolistsep,leftmargin=*]
\item Each variable node appears at most once in the expression tree.
\item The number of binary operators is $d-1$, where $d$ is the number of variable nodes (leaves)
\item The depth of the expression tree of $f$ is at most $2n$.
\item The number of nodes in the expression tree of $f$ is at most $4n - 2$.  
\end{itemize}
\end{proposition}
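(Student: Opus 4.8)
The plan is to read off all four claims from the combinatorics of the expression tree, in the given order, since each claim feeds the next. Throughout I write $B$, $U$, $L$ for the number of binary-operator nodes, shape-function (unary) nodes, and leaves; by the third bullet of Definition~\ref{def:transparent_SHARE} there are no numeric constants, so every leaf is a variable node and $L=d$. For the first claim (each variable appears at most once) I would argue by structural induction on the tree. A single variable leaf is immediate. If the root is a shape function, its unique subtree has the same variable set and the property is inherited. If the root is a binary operator $b\circ(f,g)$, then by the disjointness requirement (Rule~\ref{rule:2}, equivalently the first bullet of Definition~\ref{def:transparent_SHARE}) $f$ and $g$ have disjoint variable sets; since by induction no variable repeats inside $f$ or inside $g$ and none is shared across them, none repeats in the whole tree. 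I will reuse the consequence $d\le n$.

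For the second claim I would apply the standard edge-counting identity for rooted trees. The total node count is $B+U+L$, so the edge count is $B+U+L-1$; counting the same edges from the parent side gives $2B+U$, since each binary node contributes two child-edges, each shape-function node one, and each leaf none. Equating yields $B=L-1=d-1$.

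The remaining two claims rest on the second bullet of Definition~\ref{def:transparent_SHARE}: the argument of a shape function may not be the output of another shape function, so no two shape-function nodes are adjacent along any root-to-leaf path, and the child of any shape-function node is a non-shape node (a binary operator or a leaf). For the depth bound I would fix a longest root-to-leaf path and split its internal nodes into $b$ binary operators and $u$ shape functions; the no-adjacency condition forces $u\le b+1$, while $b$ cannot exceed the global total $B=d-1$. Hence the path carries at most $2b+1\le 2(d-1)+1=2d-1\le 2n-1$ internal nodes, giving depth at most $2n$. For the node count I would build an injection from shape-function nodes into non-shape nodes by sending each shape-function node to its unique child: that child is a non-shape node by the adjacency constraint, and the map is injective because every node has a unique parent. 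This yields $U\le B+L=2d-1$, so the total number of nodes is $B+U+L=(2d-1)+U\le (2d-1)+(2d-1)=4d-2\le 4n-2$.

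I expect the only genuine subtlety to be the depth argument: one must correctly convert the ``no two adjacent shape functions'' condition into the per-path inequality $u\le b+1$ and then couple it with the global count $B=d-1$, while being careful about whether depth is measured in edges or nodes (both conventions land at $\le 2n$). The other three steps are routine tree bookkeeping once the injection used for the final claim is spotted.
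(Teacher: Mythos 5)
Your proof is correct, and for the three counting claims it takes a genuinely different route from the paper's. For the first claim the paper argues by contradiction via the lowest common ancestor of two occurrences of the same variable, whereas you use structural induction with the disjointness condition; these are close cousins and both work. The real divergence is in the remaining claims: the paper proves both the binary-operator count and the $4n-2$ node bound by strong induction on the number of active variables of a subtree, supported by a lemma stating that active-variable counts add across a binary node. You instead get $B=L-1$ from the handshake identity $2B+U=B+U+L-1$ (which, as you implicitly note, needs only that every leaf is a variable node and none of the transparency structure), and you get $U\le B+L$ from the injection sending each shape-function node to its unique non-shape child, so the node bound $B+U+L\le 4d-2\le 4n-2$ falls out by arithmetic. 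The depth arguments are essentially the same in both versions: bound the binary operators on a root-to-leaf path by $d-1$ and observe that shape functions cannot be adjacent, though your formulation via the per-path inequality $u\le b+1$ is a cleaner way to package the paper's ``at most one shape function between consecutive operators, plus one at the root'' count. What the paper's induction buys that yours does not is a constructive tightness argument (it exhibits a transparent SHARE actually attaining $4k-2$ nodes), but the proposition only asserts the upper bounds, so your proof fully establishes the stated result; what your version buys is modularity and the observation that the second bullet is pure tree bookkeeping independent of Rules 1 and 2.
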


\begin{proof}
Appendix \ref{app:theoretical_results}.
\end{proof}

For comparison, the expression tree of a GAM has $3n-1$ nodes. That demonstrates that transparent SHAREs are not only naturally constrained, but even the largest possible expressions are not significantly longer than the expression for a GAM, even though it can capture much more complicated interactions. The immediate corollary of this proposition is useful for the implementation.

\begin{corollary}
\label{cor:1}
SHARE $f$ satisfies Rule 2 if and only if each variable appears at most once in its expression tree.
\end{corollary}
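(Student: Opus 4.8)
The plan is to treat this as a purely structural statement about the expression tree and to read ``$f$ satisfies Rule~\ref{rule:2}'' as the first condition in Definition~\ref{def:transparent_SHARE}: every binary-operation node of the tree is applied to two subtrees whose variable sets are disjoint. For a node $v$, write $V(v)$ for the set of variables occurring among the leaves of the subtree rooted at $v$. Then the claim becomes the equivalence between ``for every binary node $v$ with children $v_L,v_R$ we have $V(v_L)\cap V(v_R)=\emptyset$'' and ``each variable occurs in at most one leaf of the tree.'' Both directions should follow from unfolding these definitions; no computation is needed.

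For the ``if'' direction I would argue directly. Assume each variable occurs in at most one leaf, and let $v$ be any binary node with children $v_L,v_R$. If some variable $x$ lay in $V(v_L)\cap V(v_R)$, then $x$ would occur in a leaf beneath $v_L$ and in a leaf beneath $v_R$; since these two subtrees are disjoint (they sit under different children of $v$), $x$ would occur in two distinct leaves, contradicting the assumption. Hence $V(v_L)\cap V(v_R)=\emptyset$ at every binary node, i.e.\ Rule~\ref{rule:2} holds.

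For the ``only if'' direction I would use the lowest common ancestor (LCA). Suppose Rule~\ref{rule:2} holds and, toward a contradiction, suppose some variable $x$ occurs in two distinct leaves $\ell_1\neq\ell_2$; let $a$ be their LCA. Then $a$ is not a leaf, and it cannot be a univariate (shape-function) node: such a node has a single child, so both $\ell_1$ and $\ell_2$ would lie in that one child's subtree and their common ancestor could be taken strictly deeper, contradicting minimality of $a$. Hence $a$ is a binary node, and by definition of the LCA one of $\ell_1,\ell_2$ lies in its left subtree and the other in its right subtree, giving $x\in V(a_L)\cap V(a_R)$ and contradicting Rule~\ref{rule:2}. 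Therefore no variable occurs twice.

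I expect the only delicate point to be the case analysis at the LCA, specifically ruling out that $a$ is a univariate node; this hinges precisely on univariate nodes having exactly one child, so that two distinct leaves beneath such a node always admit a strictly deeper common ancestor. Everything else is a mechanical consequence of the definition of $V(\cdot)$ and the disjointness requirement, so the argument is short once the tree-theoretic framing and the LCA tool are in place.
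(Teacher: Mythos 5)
Your proof is correct and follows essentially the same route as the paper: the nontrivial direction is exactly the paper's lowest-common-ancestor argument (Appendix \ref{app:theoretical_results}, first claim of Proposition \ref{prop:properties}), including the observation that the LCA cannot be a unary shape-function node because its single child would be a lower common ancestor. The converse direction, which the paper leaves implicit in calling the corollary ``immediate,'' is handled by you with the same straightforward disjointness-of-subtrees observation.
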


\paragraph{Closed-Form Equations Considered as Transparent SHAREs.} Consider equation I.34.14 from the Feynman Symbolic Regression Database \citep{Udrescu.AIFeynmanPhysicsinspired.2020},

\begin{equation}
    \omega = \frac{1+ v/c}{\sqrt{1-v^2/c^2}} \omega_0,
\end{equation} where we consider $\omega,v,c,\omega_0$ as variables. Note that this representation violates Rule \ref{rule:2}, and it may be difficult to understand this equation in this format. For instance, comprehending how changing $v$ impacts $\omega$ is challenging as we have two terms (numerator and denominator) that are not independent. Instead, we can rewrite this equation as a transparent SHARE
\begin{equation}
\label{eq:diff_rep_1}
    \omega = s_1\left(\frac{v}{c}\right) \omega_0,
\end{equation}
where $s_1(x) = \frac{1+x}{\sqrt{1-x^2}}$ and can be visualized as in \cref{fig:equation_different_format}.

\begin{figure}[t!]
\centering
  \includegraphics[scale=0.3]{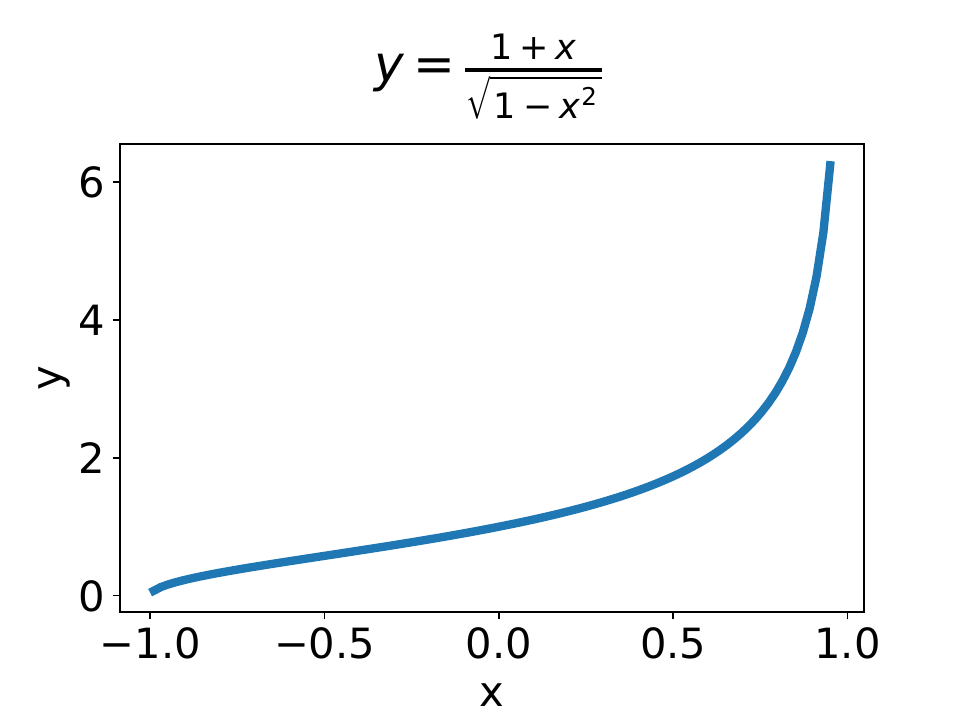}
  \caption{\small Plot of a function $s_1(x) = \frac{1+x}{\sqrt{1-x^2}}$.}
  \label{fig:equation_different_format}
\end{figure}

We can now understand the properties of $s_1$. For instance, it is defined on $(-1,1)$; it is increasing, starts concave, and has an inflection point at $x=0$. After that, it is convex until an asymptote at $x=1$, where it approaches $+\infty$. By understanding the properties of $s_1$, we can easily understand the behavior of the whole Equation \ref{eq:diff_rep_1}.

\section{SHARES IN ACTION}

\label{sec:experiments}

In this section, we perform a series of experiments to show how SHAREs work in action.\footnote{The code for all experiments can be found at \url{https://github.com/krzysztof-kacprzyk/SHAREs}}
First, we justify our claim that SHAREs extend GAMs (Section \ref{sec:share_extend_gams}) and SR (Section \ref{sec:SHAREs_extend_sr}). Then, we show an example that cannot be fitted by GAM or by SR. For every experiment, we show the Pareto frontier of the found expressions with respect to $R^2$ score and the number of shape functions, i.e., the best expression for a given number of shape functions. For details about the experiments, including additional experiments on real datasets, see Appendix \ref{app:experiments}.

\paragraph{Implementation} For illustrative purposes, we propose a simple implementation of SHAREs utilizing nested optimization. The outer loop employs a modified genetic programming algorithm (based on gplearn \citep{Stephens.GplearnGeneticProgramming.2022} used for symbolic regression), while the inner loop optimizes shape functions as neural networks via gradient descent. Although not a key contribution due to its limited scalability, this implementation demonstrates SHAREs' potential to outperform existing transparent methods and enhance interpretability, given more efficient optimization algorithms. We note that optimization of transparent models is usually harder than that of black boxes \citep{Rudin.InterpretableMachineLearning.2022}. For further implementation details, see \cref{app:implementation}.

\subsection{SHAREs Extend GAMs}
\label{sec:share_extend_gams}

As we discussed earlier, GAMs (without interactions) are examples of SHAREs. That means that, in particular, if we have a dataset that can be modeled well by a GAM, SHAREs should also model it well. To verify this, we generate a semi-synthetic dataset inspired by the application of GAMs to survival analysis described by \cite{hastie1995generalized}. In this work, GAMs are used to model the risk scores of patients taking part in a clinical trial for the treatment of node-positive breast cancer. We choose three of the covariates considered and assume that the risk score (log of hazard ratio) can be modeled as a GAM of age, body mass index (BMI), and the number of nodes examined. We recreate the shape functions to resemble the ones reproduced in the original paper. Then we choose the covariates uniformly from the prescribed ranges and calculate the risk scores.

We fit SHAREs to this dataset and show the results in \cref{fig:results_risk_shares}. Each row shows the best equation with the corresponding number of shape functions, and the shape functions of the equation with three shape functions are shown at the bottom of the figure.

\begin{figure}[h]
    \centering

        \label{tab:results_risk_scores}

        \resizebox{\linewidth}{!}{
        \begin{tabular}{clc}
        \toprule
        \#s & Equation & $R^2$ score \\
        \midrule
        0 & $y = \frac{x_{\text{nodes}}}{x_{\text{age}} x_{\text{bmi}}}$ & -0.267 \\
        1 & $y=x_{\text{age}}s_3(x_{\text{bmi}}) $ & 0.630 \\
        2 & $y=s_2(x_{\text{age}})+s_3(x_{\text{bmi}})$ & 0.847 \\
        3 & $y=s_1(x_{\text{nodes}})+s_2(x_{\text{age}})+s_3(x_{\text{bmi}})$ & 0.999 \\
        4 & $y=s_0(s_1(x_{\text{nodes}})+s_2(x_{\text{age}})+s_3(x_{\text{bmi}}))$ & 0.989 \\
        \bottomrule
        \end{tabular}
        }
                \vspace{5mm}

          \includegraphics[width=.9\linewidth]{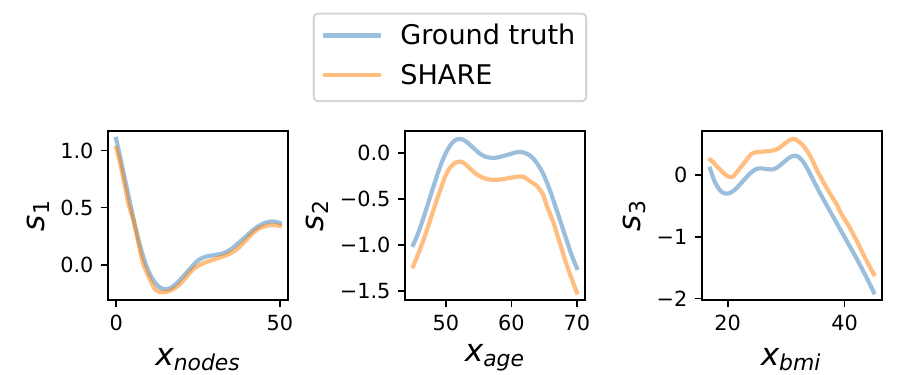}

    \caption{\small Results of fitting SHAREs to the risk score data.  Each row in the table shows the best found equation with the corresponding number of shape functions (\#s). At the bottom, shape functions from the fourth row compared to the ground truth.}
    \label{fig:results_risk_shares}
\end{figure}

We see that the equation in the fourth row achieves a high $R^2$ score. It is also in the desired form. When we plot the shape functions in \cref{fig:results_risk_shares} we see that they match the ground truth well (the vertical translation is caused by the fact that shape functions can always be translated vertically).

\subsection{SHAREs Extend SR}
\label{sec:SHAREs_extend_sr}

\paragraph{Torque Equation}
Consider equation I.18.12 (Table \ref{fig:issues_gams_results}) used to calculate torque, given by $\tau = r F \sin(\theta)$. We sample 100 rows from the Feynman dataset corresponding to this expression and we run our algorithm. Each row of the table in \cref{fig:results_torque_shares} shows the best equation with the corresponding number of shape functions. The bottom part of the figure shows the shape functions of the equations in the second and fourth rows.

\begin{figure}[h]
    \centering
   
    \begin{tabular}{clc}
    \toprule
    \# s & Equation & $R^2$ score \\
    \midrule
    0 & $\tau = r$ & -0.115 \\
    1 & $\tau = r F s_3(\theta)$ & 0.999 \\
    2 & $\tau = s_1(r) F s_3(\theta)$ & 0.999 \\
    3 & $\tau = s_1(r) s_2(F) s_3(\theta) $ & 0.999 \\
    4 & $\tau = s_0(s_1(r)s_1(F)s_3(\theta))$ & 0.999 \\
    \bottomrule
    \end{tabular}

    \vspace{5mm}
  \includegraphics[width=.9\linewidth]{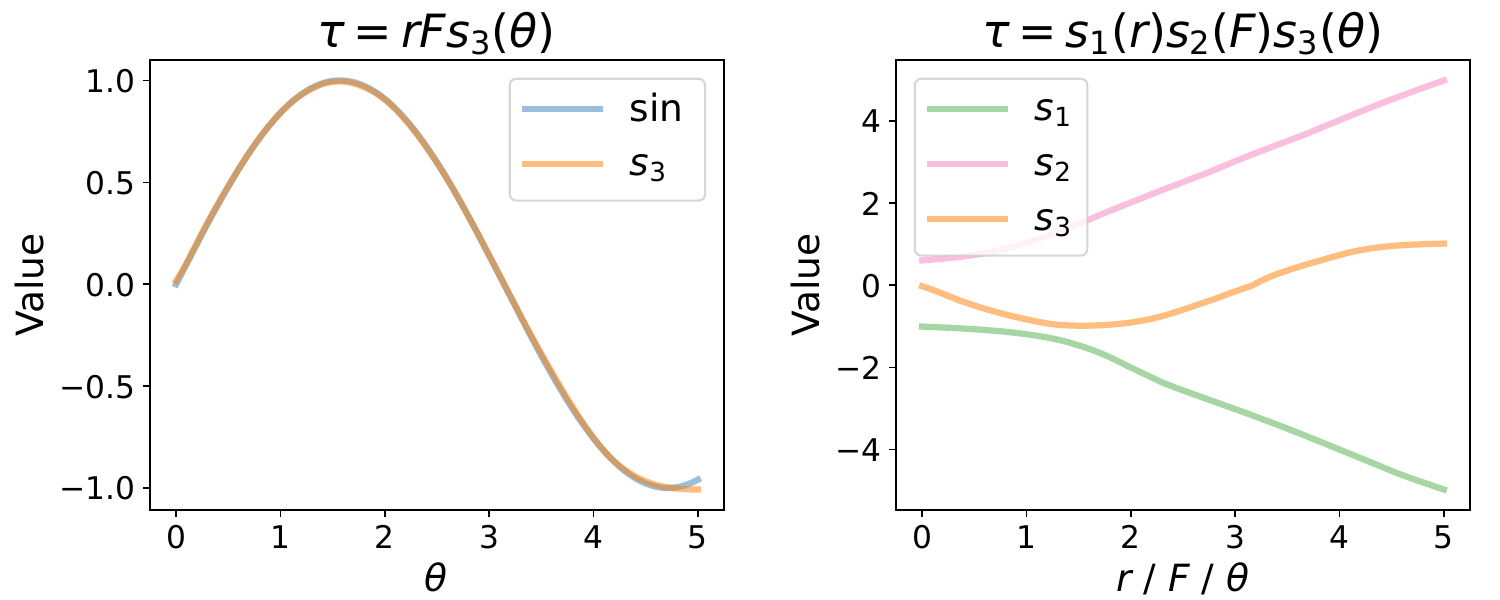}

    \caption{\small Equations found by fitting SHAREs to a torque equation $\tau = r F \sin(\theta)$. Each row in the table shows the best-found equation with the corresponding number of shape functions (\#s). Bottom left panel: shape function from the second row compared to ground truth. Bottom right panel: shape functions from the fourth row.}
    \label{fig:results_torque_shares}
\end{figure}

The equation that is symbolically equivalent to the ground truth is in the second row, $\tau = r F s_3(\theta)$. It achieves a nearly perfect $R^2$ score. By plotting $s_3$, we can verify that it matches $\sin$ function well (\cref{fig:results_torque_shares}, bottom left panel).

\paragraph{Shape Functions of the Longer Equations}

Consider the expression in row 4 from the table in \cref{fig:results_torque_shares}, $\tau = s_1(r) s_2(F) s_3(\theta)$. It might look complicated because it contains three shape functions. But, if we inspect $s_1$ and $s_2$ (\cref{fig:results_torque_shares}, bottom right panel), we see that they are linear functions. We can fit straight lines to extract their slopes and intercepts and put them into the found SHARE to get a simple expression $-(0.98r+0.07)(F+0.02)s_3(\theta)$.

\subsection{SHAREs Go Beyond SR and GAMs}
\label{sec:SHAREs_go_beyond}

We consider the following problem. Given $m$ grams of water (in a liquid or solid form) of temperature $t_0$ (in $^{\circ}C$), what would be the temperature of this water (in a solid, liquid, or gaseous form) after heating it with energy $E$ (in calories). We restrict the initial temperature to be from -100 $^{\circ}$C to 0 $^{\circ}$C. This is a relatively simple problem with only 3 variables but we will show that both GAMs and SR are not sufficient to properly (and compactly) model this relationship. 

\paragraph{GAMs} First, we fit GAMs without interactions using EBM \citep{Lou.IntelligibleModelsClassification.2012,Nori.InterpretMLUnifiedFramework.2019}. The shape functions of EBM are presented in \cref{fig:temperature_ebm_no_interactions}. The $R^2$ score on the validation set is 0.758. We can also see that the two of the shape functions are very jagged. This makes it difficult to gain insight into the studied phenomenon.

\begin{figure}[h!]
\centering
  \includegraphics[scale=0.4]{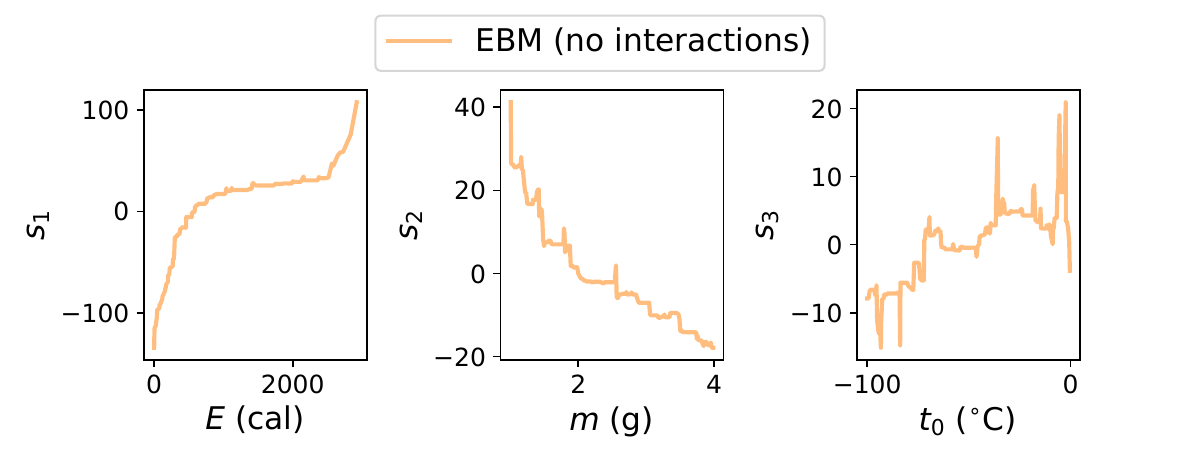}
  \caption{\small Shape functions from the GAM fitted to the temperature dataset}
  \label{fig:temperature_ebm_no_interactions}
\end{figure}

\paragraph{GA$^2$Ms} Now, we fit GAMs with pairwise interactions \citep{Lou.AccurateIntelligibleModels.2013}, once again using the EBM algorithm. The shape functions of EBM are presented in \cref{fig:temperature_ebm_interactions} in \cref{app:additional_results}. Although the $R^2$ score has been improved to 0.875, EBM's transparency is reduced even further by adding pairwise interactions.

\paragraph{Symbolic Regression} We fit symbolic regression using the PySR library \citep{pysr}. We limit the size of the program to 40, and we present the results in \cref{tab:results_temperature_sr}.

\begin{table}[h]

\caption{\small Equations found by SR when fitted to the temperature data. The last four equations do not fit in the table; they are reproduced in Appendix \ref{app:additional_results}.}
\vspace{2mm}
\label{tab:results_temperature_sr}
\centering
\begin{tabular}{lcc}
\toprule
Equation & Size & $R^2$ score \\
\midrule
$y = 13.5 \log{\left(E \right)}$ & 4 & 0.384 \\
$y = \frac{0.193 E}{m}$ & 5 & 0.485 \\
$y = 39.4 \log{\left(\frac{E}{m} \right)} - 141$ & 8 & 0.733 \\
Appendix \ref{app:additional_results} \textit{Equation \ref{eq:temp_eq_10}} & 17 & 0.768  \\
Appendix \ref{app:additional_results} \textit{Equation \ref{eq:temp_eq_14}} & 23 & 0.817\\
Appendix \ref{app:additional_results} \textit{Equation \ref{eq:temp_eq_19}} & 33 & 0.841  \\
Appendix \ref{app:additional_results} \textit{Equation \ref{eq:temp_eq_24}} & 40 & 0.867 \\

\bottomrule
\end{tabular}
\end{table}

We can see that shorter equations achieve a relatively low performance, no more than 0.768. This is comparable to a GAM without interaction terms. The first equation to include the term $t_0$ appears only when the size is 17. It's already too long to include in the table. It looks like this:
\begin{equation*}
\footnotesize
 y = 74.0 \cos{\left(\log{\left(\frac{0.739 E}{m} + 19.1 \right)} \right)} + 39.1 + \frac{t_0}{E}
\end{equation*}

Only the most complex equations give us a performance comparable to a GAM with interactions: 0.867. The last equation from the table is shown below. We argue that its complexity hinders its transparency.

\begin{figure*}[t]
    \centering
      \begin{minipage}{.4\textwidth}
        \centering
        \resizebox{\textwidth}{!}{
        \begin{tabular}{clc}
        \toprule
        \#s & Equation & $R^2$ score \\
        \midrule
        \vspace{1mm}
        0 & $t=m$ & -3.513 \\
        \vspace{2mm}
        1 & $t = s_1\left(\frac{E}{m} + t_0\right)$ & 0.970 \\
        \vspace{2mm}
        2 & $t= s_1\left((\frac{E}{m} + s_2(t_0)\right)$ & 0.999 \\
        \vspace{2mm}
        3 & $t= s_1\left(\frac{E + s_2(t_0)}{s_0(m)}\right)$ & 0.988 \\
        \vspace{2mm}
        4 & $t= s_1\left(\frac{E}{s_0\left(s_3(m) s_2(t_0)\right)}\right)$ & 0.942 \\
        \bottomrule
        \end{tabular}
        }
    \end{minipage}%
    \begin{minipage}{.6\textwidth}
    \centering
      \includegraphics[width=.9\linewidth]{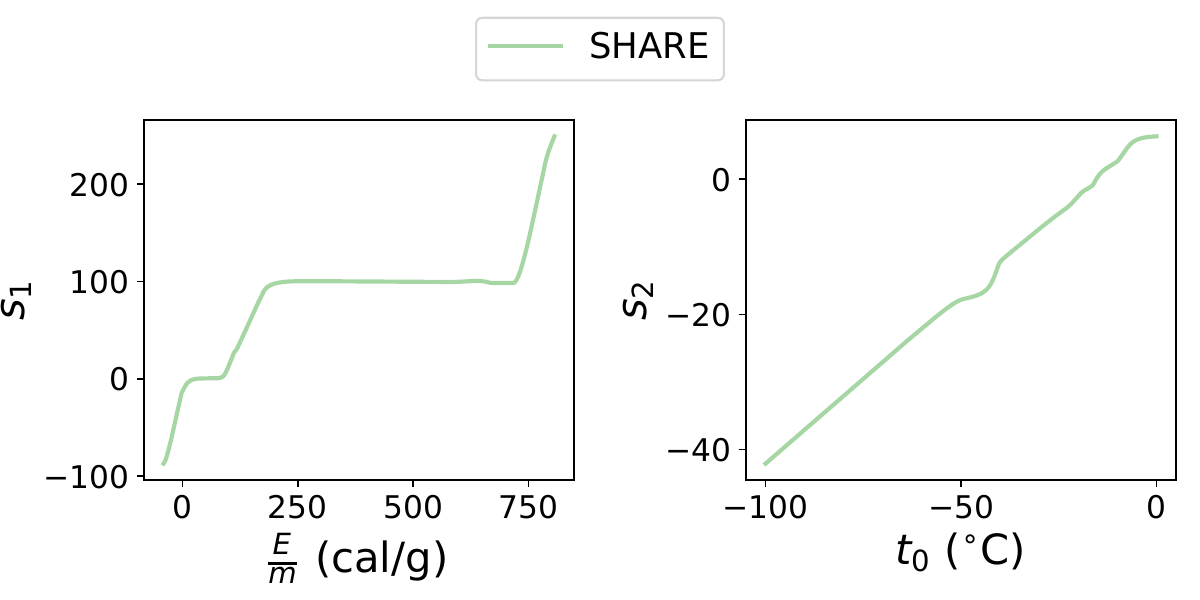}
    \end{minipage}
    \caption{\small SHAREs found for the temperature dataset. Each row in the table shows the best expression with the corresponding number of shape functions (\#s). Right panel: shape functions of the SHARE $s_1\left((\frac{E}{m} + s_2(t_0)\right)$ (third row).}
    \label{fig:results_temperature_shares}
\end{figure*}

\begin{equation*}
\footnotesize
\begin{split}
 y = &\frac{t_0}{\log{\left(E \right)}} - 1.72 e^{e^{\cos{\left(\frac{0.0103 E}{m} \right)}}} + 80.1 +
 56.3 \\
&\times \cos{\left(\log{\left(\frac{0.58 E}{m} + 31.7 \cos{\left(\frac{0.021 E}{m} + 0.93 \right)} \right)} \right)} 
 \end{split}
\end{equation*}

\paragraph{SHAREs} We finally fit SHAREs to the temperature data. The found expressions are shown in \cref{fig:results_temperature_shares}. We immediately see a very good performance from all models apart from the one not using any shape functions at all. The scores are also much better than the scores achieved by GAMs (with or without interactions) and SR. Let us investigate the equation in the third row; the shape functions are presented in \cref{fig:results_temperature_shares} (right panel).

We note that the expression $s_1\left((\frac{E}{m} + s_2(t_0)\right)$ has a better performance than GAMs and SR, a more compact symbolic representation than SR, and simpler shape functions than GAM. This exemplifies how, by combining the advantages of GAMs and SR, we can address their underlying limitations. Let us see how this particular SHARE can aid in understanding the phenomenon it fits.

\paragraph{Analysis} 
We recognize that $s_1$ is contingent on the energy-to-mass ratio, which is offset by a function of the initial temperature, $t_0$. As shown in \cref{fig:results_temperature_shares}'s right panel, $s_2$ appears linear. Replacing $s_2$ with an equivalent linear function and adjusting the equation gives us:
$ t = s_1\left(\frac{E}{m} + 0.49t_0 + 7.39\right) $.
Analyzing $s_1$, we find that without energy input, $\frac{E}{m} + 0.49t_0 + 7.39$ ranges from $-41.61$ ($t_0=-100$) to $7.39$ ($t_0 = 0$), aligning with the first linear part of the $s_1$ curve. Increasing energy per mass initially raises the temperature linearly to 0 $^{\circ}$C, then plateaus, characteristic of an ice-water mixture. When all ice melts, the temperature rises linearly again to 100 $^{\circ}$C, remaining constant until all water evaporates, after which steam temperature again increases linearly.

\paragraph{Quantitative Insights} The shape functions also provide quantitative insights. The slopes of $s_1$'s linear parts approximate the specific heat capacities of ice, water, and steam. The constant parts' widths estimate the heat of fusion and vaporization. We compare these estimates from $s_1$ with the physical ground truth in \cref{tab:water_properties}. We highlight that it is impossible to draw these insights from the fitted GAM (see \cref{fig:temperature_ebm_no_interactions}) and from the found closed-form expressions (see \cref{tab:results_temperature_sr}).

\begin{table}[h]
\caption{\small Properties of water extracted from shape function $s_1$ compared to the ground truth.}
\vspace{2mm}
\label{tab:water_properties}
\centering
\resizebox{\linewidth}{!}{
\begin{tabular}{lcc}
\toprule
Property & From $s_1$ & Ground truth \\
\midrule
Spec. heat cap. of ice ($\frac{cal}{g {}^{\circ}C}$) & 0.53 & 0.50 \\
Spec. heat cap. of water ($\frac{cal}{g {}^{\circ}C}$) & 1.01  & 1.00 \\
Spec. heat cap. of steam ($\frac{cal}{g {}^{\circ}C}$) & 0.50 & 0.48 \\
Heat of fusion ($\frac{cal}{g}$) & 78.85 & 79.72 \\
Heat of vaporization ($\frac{cal}{g}$) & 540.91 & 540.00 \\
\bottomrule
\end{tabular}}

\end{table}

\section{DISCUSSION}

\paragraph{Categorical Variables} The default way current SR algorithms take care of categorical variables is through one-hot encoding. However, that significantly increases the number of variables and makes the resulting expressions less legible. SHAREs offer a natural way of extending SR to settings with categorical variables by always passing such variables through a shape function first. It assigns a number to each class and can be visualized using, for instance, a bar plot. This can prove useful when we investigate phenomena comprising different objects with some possibly unknown latent properties. In that setting, SHAREs may naturally learn a shape function that differentiates between those objects and their properties. An example of such a property may be a friction coefficient, the moment of inertia, or something that does not relate naturally to commonly used concepts.

\paragraph{Applications} SHAREs can be beneficial in settings where transparent models are needed or preferred, such as risk prediction in healthcare and finance. However, we mainly envision its usage in AI applications for scientific discovery: AI4Science.
We believe that we need to add more flexibility to our models for AI4Science to advance beyond the synthetic experiments based on simple physical equations (as is often the case for symbolic regression). Transparent SHAREs add this flexibility without compromising the comprehensibility. AI4Science is also likely to contain multiple scenarios where adequate resources (time and attention) can be spent on analyzing and understanding the models thoroughly from the ground up in a way that is facilitated by the rules we propose. We also hope this new kind of \textit{rule-based transparency} can inspire novel, more systematic, notions of transparency grounded in the way the model is understood.

\paragraph{Limitations of Rule-based Transparency} The two rules for building machine learning models in a transparency-preserving way are a step towards more systematic notions of transparency. We show how they are realized in many physics equations and GAMs. They also offer a natural way to constrain the space of models without using crude proxies, such as the number of terms or the depth of the expression tree. However, some shape functions may be difficult to interpret, and the number of compositions may significantly increase the cognitive load required to analyze the expression. Although transparent SHAREs guarantee a certain level of transparency, some additional constraints may need to be enforced in practice. We elaborate on our attempt to encourage ``nicer" shape functions in \cref{app:discussion}. 

\paragraph{Limiations of the Implementation}
The current implementation of SHAREs is mainly for illustrative purposes, and thus, it is time-intensive. While this may not be a concern for certain (not time-sensitive) applications, such as scientific discovery, we are confident that further optimizations will enable wider adoption of this novel approach by enhancing the ability to fit SHAREs to even larger and more complex datasets.

\subsection*{Acknowledgements}
This work was supported by Azure sponsorship credits granted by Microsoft’s AI for Good Research Lab and Roche. We want to thank Jeroen Berrevoets, Andrew Rashbass, and anonymous reviewers for their useful comments and feedback on earlier versions of this work, as well as 
Tennison Liu for insightful discussions and support.

\bibliography{references}

\newpage

\section*{Checklist}

 \begin{enumerate}

 \item For all models and algorithms presented, check if you include:
 \begin{enumerate}
   \item A clear description of the mathematical setting, assumptions, algorithm, and/or model. Yes, in \cref{sec:SHAREs_transparency}, \cref{sec:experiments}, \cref{app:theoretical_results}, and \cref{app:implementation}.
   \item An analysis of the properties and complexity (time, space, sample size) of any algorithm. Yes, in \cref{app:discussion}.
   \item (Optional) Anonymized source code, with specification of all dependencies, including external libraries.
 \end{enumerate}

 \item For any theoretical claim, check if you include:
 \begin{enumerate}
   \item Statements of the full set of assumptions of all theoretical results. Yes, in \cref{sec:SHAREs_transparency}.
   \item Complete proofs of all theoretical results. Yes, in \cref{app:theoretical_results}.
   \item Clear explanations of any assumptions. Yes, in \cref{sec:shares} and \cref{sec:SHAREs_transparency}.
 \end{enumerate}

 \item For all figures and tables that present empirical results, check if you include:
 \begin{enumerate}
   \item The code, data, and instructions needed to reproduce the main experimental results (either in the supplemental material or as a URL). All experimental and implementation details are described in \cref{app:implementation} and \cref{app:experiments}. Code for all experiments can be found at \url{https://github.com/krzysztof-kacprzyk/SHAREs}.
   \item All the training details (e.g., data splits, hyperparameters, how they were chosen). Yes, in \cref{app:implementation} and \cref{app:experiments}.
         \item A clear definition of the specific measure or statistics and error bars (e.g., with respect to the random seed after running experiments multiple times). Yes, where applicable.
         \item A description of the computing infrastructure used. (e.g., type of GPUs, internal cluster, or cloud provider). Yes, in \cref{app:experiments}.
 \end{enumerate}

 \item If you are using existing assets (e.g., code, data, models) or curating/releasing new assets, check if you include:
 \begin{enumerate}
   \item Citations of the creator If your work uses existing assets. Yes.
   \item The license information of the assets, if applicable. Yes, in \cref{app:experiments}.
   \item New assets either in the supplemental material or as a URL, if applicable. Not applicable.
   \item Information about consent from data providers/curators. Not applicable.
   \item Discussion of sensible content if applicable, e.g., personally identifiable information or offensive content. Not applicable.
 \end{enumerate}

 \item If you used crowdsourcing or conducted research with human subjects, check if you include:
 \begin{enumerate}
   \item The full text of instructions given to participants and screenshots. Not Applicable.
   \item Descriptions of potential participant risks, with links to Institutional Review Board (IRB) approvals if applicable. Not Applicable.
   \item The estimated hourly wage paid to participants and the total amount spent on participant compensation. Not Applicable.
 \end{enumerate}

 \end{enumerate}

\newpage

\onecolumn
\appendix

\section*{TABLE OF SUPPLEMENTARY MATERIALS}
\begin{enumerate}
\item \cref{app:theoretical_results}: theoretical results
\item \cref{app:implementation}: implementation details
\item \cref{app:experiments}: experimental details (additional results)
\item \cref{app:discussion}: additional discussion
\end{enumerate}

\section{THEORETICAL RESULTS}
\label{app:theoretical_results}
This section provides proof of the properties listed in Proposition \ref{prop:properties}.

First, let us define \textit{active variables} and a \textit{subtree of a node}.
\begin{definition}[Active variables]
Consider a SHARE $f:\mathbb{R}^n \rightarrow \mathbb{R}$ and its expression tree. The \textit{set of active variables} of the tree (or of $f$) is the set of variables present in the tree.

For instance, a function $f(x_1,x_2,x_3) = x_1 + x_2$, represented as a tree with 3 nodes: +, $x_1$, $x_2$, has the set of active variables $\{x_1,x_2\}$ and a set of not active variables $\{x_3\}$.
\end{definition}

\begin{definition}[Subtree of a node]
Consider an expression tree $T$. For each node $A$ in $T$, we define the subtree of $A$ as a subtree of $T$ containing $A$ and all its descendants. We call $f_A$ the function represented by the subtree of $A$.
\end{definition}

For the rest of the section, we assume that $f:\mathbb{R}^n \rightarrow \mathbb{R}$ is a transparent SHARE (according to Definition \ref{def:transparent_SHARE}) and its expression tree is called $T$

\subsection{Variable Nodes}
Claim: Each variable node appears at most once in the expression tree. The maximum number of leaves is $n$.

\begin{proof}
Assume for contradiction there are two nodes, $A$ and $B$, describing the same variable $x$. Consider the lowest common ancestor of $A$ and $B$ called $C$. If $C$ was a shape function, then the child of $C$ would be a lower common ancestor of $A$ and $B$. Thus, $C$ is a binary function with children $C_1$ and $C_2$. Without loss of generality, assume that $A$ is in the subtree of $C_1$. Then $B$ has to be in the subtree of $C_2$ (otherwise $B$ would have to be in a subtree of $C_1$ and $C_1$ would be a lower common ancestor of $A$ and $B$). Thus, the functions $f_{C_1}$, $f_{C_2}$ have a non-empty set of active variables (contains at least $x$). Thus $C$ is a binary operator applied to two functions with an overlapping set of active variables. Thus, $f$ does not satisfy Rule 2, which contradicts $f$ being transparent.

Thus, each variable node appears only once in the expression tree. As there are $n$ variables, there are at most $n$ variable nodes. This is the same as the number of leaves, as variable nodes are the only kinds of leaves.
\end{proof}

\subsection{Useful Lemma}
In the next proofs, the following lemma will be helpful.

\begin{lemma}
\label{lem:1}
Consider a node $A$ that corresponds to a binary operator. Let us call the children of $A$, $A_1$, and $A_2$. If $f_{A_1}$ has $a$ active variables $f_{A_2}$ has $b$ active variables then $f_A$ has $a+b$ active variables. Also, $a,b<a+b$.
\end{lemma}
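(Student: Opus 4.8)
The plan is to establish the two claims separately, each following directly from Definition \ref{def:transparent_SHARE}. Throughout, let me write $V_A$ for the set of active variables of $f_A$, so that $|V_{A_1}| = a$ and $|V_{A_2}| = b$. For the additivity claim, I would first observe that the subtree of $A$ consists of the node $A$ itself---a binary operator, hence not a variable node---together with the subtrees of its two children $A_1$ and $A_2$. Every variable node appearing in the subtree of $A$ therefore lies in exactly one of the two child subtrees, so the set of active variables of $f_A$ is the union $V_A = V_{A_1} \cup V_{A_2}$.

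Next I would invoke the first criterion of Definition \ref{def:transparent_SHARE}: since $f$ is a transparent SHARE and $A$ is a binary operator, the operand functions $f_{A_1}$ and $f_{A_2}$ have disjoint sets of variables, i.e. $V_{A_1} \cap V_{A_2} = \emptyset$. Combining this with the previous observation and using additivity of cardinality over disjoint sets gives $|V_A| = |V_{A_1}| + |V_{A_2}| = a + b$, which is the first part of the lemma.

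For the strict inequalities $a < a+b$ and $b < a+b$, it suffices to show $a \geq 1$ and $b \geq 1$, that is, that each child subtree contains at least one active variable. Here I would appeal to the third criterion of Definition \ref{def:transparent_SHARE}, namely that a transparent SHARE contains no numeric constants; consequently every leaf of $T$ is a variable node. Since each of the subtrees of $A_1$ and $A_2$ is a nonempty expression tree, each contains at least one leaf and hence at least one variable node, so $a \geq 1$ and $b \geq 1$. The inequalities $a = a + (b - b) < a + b$ and $b < a + b$ then follow at once.

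The argument is essentially routine, so I do not anticipate a genuine obstacle; the only point that requires care is the reliance on the no-constants condition for the strict inequality. Without it, a child subtree could in principle be a pure constant expression (for example a lone numeric leaf) carrying no active variables, in which case $a$ or $b$ could be zero and the strict inequality would fail. Flagging this dependence explicitly is the one subtlety I would be sure to highlight.
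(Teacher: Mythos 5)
Your proof is correct and follows essentially the same route as the paper: the active variables of $f_A$ are the union of those of $f_{A_1}$ and $f_{A_2}$, and disjointness from Definition \ref{def:transparent_SHARE} gives $|V_A| = a + b$. You go slightly further than the paper's proof by explicitly justifying $a, b \geq 1$ via the no-constants condition (the paper leaves the strict inequalities implicit), which is a worthwhile addition rather than a divergence.
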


\begin{proof}
The set of active variables of $f_A$ is the union of active variables of $f_{A_1}$ and $f_{A_2}$. As these functions have disjoint sets of active variables (because $f$ is transparent), the number of active variables of $f_A$ is just a sum of the numbers of active variables of $f_{A_1}$ and $f_{A_2}$.
\end{proof}

\subsection{Number of Binary Operators}
Claim: The number of binary operators is $d-1$ where $d$ is the number of active variables of $f$.

\begin{proof}
Let us prove the following more general statement: Consider the node $B$. If the number of active variables of $f_B$ is $k$ then the number of binary operators in the subtree of $B$ is $k-1$.

We prove it by strong induction on $k$. 

When $k=1$ (we cannot have $k=0$ as we do not have any constants), the subtree of $B$ is either a variable node or a shape function with a node variable as a child. In both cases, the number of binary operators is $0$.

Let us assume the statement is true for all $m<k+1$.
Assume that the subtree of $B$ has $k+1$ active variables. $B$ is either a shape function or a binary operator. If $B$ is a binary operator, then it has two children. Let us call them $B_1$ and $B_2$. Let us denote the number of active variables of $f_{B_1}$ as $a$ and of $f_{B_2}$ as $b$. From Lemma \ref{lem:1}, $k+1 = a + b$ and $a<k+1$, and $b<k+1$. From the induction hypothesis, the subtree of $B_1$ has $a-1$ binary operators, and the subtree of $B_2$ has $b-1$ binary operators. Thus the subtree of $B$ has $(a-1) + (b-1) + 1 = a+b-1 = k$ binary operators. If $B$ is a shape function, then its child is a binary operator, and the same argument follows. 

By induction, if the number of active variables of $f_B$ is $k$ then the number of binary operators in the subtree of $B$ is $k-1$.

Now $f$ has $d$ active variables, so it has $d-1$ binary operators.
\end{proof}

\subsection{Depth of the Expression Tree}
Let $d$ be the number of active variables of $f$. By the previous result, it has $d-1$ binary operators. That means that on the path from the root to the variable node, there are at most $d-1$ binary operators. On this path, every pair of consecutive binary operators can be separated by at most one shape function (the same is true for a binary operator and a variable node). Thus the maximum number of nodes on the path from the root to the variable node is a sum of $d-1$ (number of binary operators), $d-2$ (number of shape functions between the operators), $1$ (shape function between the operator and the variable node), $1$ (shape function as a root), $1$ (the variable node itself). This gives a total of $(d-1)+(d-2)+1+1+1=2d$. Thus, the maximum depth of the tree is $2d$. As $d\leq n$, we get that the maximum depth of the tree is $2n$.

\subsection{Size of the Expression Tree}
Claim: The number of nodes in a tree is at most $4n-2$.

\begin{proof}
Let us prove the following more general statement: Consider a node $A$. If the number of active variables of $f_A$ is $k$ then the maximum number of nodes in the subtree of $A$ is $4k-2$ if $A$ is a shape function and $4k-3$ otherwise.

Let us prove it by strong induction on $k$.

Consider $k=1$. The subtree of $A$ is either a variable node or a shape function with a variable node as a child. The number of nodes is either $1$ if it is a variable node or $2$ if it is a shape function. As $4\times 1 - 2 = 2$ and $4\times 1 - 3 = 1$, the base case is satisfied.

Let us assume the statement is true for all $m<k+1$.

Consider a node $A$ whose subtree has $k+1$ active variables. If $A$ is a binary operator, then it has two children, $A_1$ and $A_2$. Their subtrees have respectively $a$ and $b$ active variables. From Lemma \ref{lem:1}, $a+b=k+1$. By the induction hypothesis, the maximum number of nodes in the subtree of $A_1$ is $4a-2$ and $4b-2$ in the subtree of $A_2$. Thus, the maximum number of nodes in the subtree of $A$ is $(4a-2)+(4b-2)+1=4(a+b)-3=4(k+1)-3$. This proves one part of the claim. If $A$ is a shape function, then its child is a binary operator with $k+1$ active variables. But we have just proved that the subtree of this operator has at most $4(k+1)-3$ nodes. That means that the maximum number of nodes in the subtree of $A$ is $4(k+1)-3+1=4(k+1)-2$ as required.

The claim is true by induction. Now, we want to show that such a tree always exists. Consider a binary operator node $A_1$ whose subtree has $k$ active variables $\{x_1,\ldots,x_k\}$. Let its children be two shape functions $B_1$ and $C_1$. Let the child of $C_1$ be a variable node corresponding to $x_1$. Let the child of $B_1$ be a binary operator $A_2$. We repeat the process. In general, binary operator node $A_i$ has two children, $B_i$ and $C_i$. The child of $C_i$ is a variable node corresponding to $x_i$ and the child of $B_i$ is the binary operator $A_{i+1}$. We can repeat the process until $i=k-1$. At this point, the child of $B_{k-1}$ needs to be a variable node corresponding to $x_k$. Overall, we have $k-1$ binary operators $A_1, \ldots, A_{k-1}$. $k-1$ shape functions $B_1, \ldots, B_{k-1}$, $k-1$ shape functions $C_1, \ldots, C_{k-1}$, and $k$ variable nodes. Thus the total number of nodes is $3(k-1)+k = 4k-3$. If the first node is a shape function, then its child is the binary operator node $A_1$ and the total number of nodes is $4k-2$.

As the number of active variables in the whole tree is less than $n$, then the maximum number of nodes is $4n-2$.
\end{proof}

\section{IMPLEMENTATION}
\label{app:implementation}

We implement SHAREs using nested optimization. The outer loop is a modified genetic programming algorithm (based on gplearn \citep{Stephens.GplearnGeneticProgramming.2022} that is used for symbolic regression) that finds a symbolic expression with placeholders for the shape functions and the inner loop optimizes the shape functions. We implement the shape functions as neural networks and optimize the model using a gradient descent algorithm.

\subsection{Modifications to the Genetic Algorithm}
\label{app:modifications_to_gp}
Gplearn is a symbolic regression algorithm that represents equations as expression trees and uses genetic programming to alter the equations (programs) from one generation to the next based on their fitness score. We modify this algorithm so that the found expressions contain placeholders for the shape function. To compute the fitness of an expression, the whole equation is fitted to the data, and the shape functions are optimized using gradient descent. To guarantee that all equations are transparent (i.e., they satisfy Definition \ref{def:transparent_SHARE}), we change the way the initial population is created and modify some of the rules by which the equations evolve. We describe the details of the modifications below. 

\paragraph{Initial Population}
We disable the use of constants and allow only binary operations in $\mathcal{B}$ and shape functions. We grow the expression trees at random, starting from the root. The next node is chosen randomly and constrained such that: a) if the parent node is a shape function, then the child cannot be a shape function, and b) no variable can appear twice in the tree.
By the Corollary \ref{cor:1}, the second condition is equivalent to satisfying Rule 2.

\paragraph{Crossover}
We call the variables present in a tree \textit{active variables}. During crossover, we select a random subtree from the program to be replaced. We take the union between the active variables in the subtree and the variables that are not active in the whole program. A donor has a subtree selected at random such that its set of active variables is contained in the previous set. This subtree is inserted into the original parent to form an offspring. This guarantees that no variable appears twice in the offspring.

\paragraph{Subtree Mutation} We perform the same procedure as in crossover but instead of taking a subtree from a donor, we create a new program with variables from the allowed set.

\paragraph{Point Mutation} This procedure selects a node and replaces it for a different one. A binary operation is replaced by a different binary operation. Shape functions are not replaced. All variables that are supposed to be replaced are collected in a set. This set is enlarged by the variables that are not active. For each mutated node the variable is drawn from this set without replacement.

Reproduction and hoist mutation has not been altered. For more details about the genetic programming part of the algorithm, please see the official documentation of gplearn.

\paragraph{Binary Operations $\mathcal{B}$} We choose the set of binary operations to be $\mathcal{B}=\{+,\times,\div\}$ (we remove ``$-$" to remove redundancy and reduce the search space).

\subsection{Optimization of the Shape Functions}

\paragraph{Shape Functions $\mathcal{S}$} We choose the set of shape functions $\mathcal{S}$ to be a family of Neural Networks with 5 hidden layers, each with a width of 10. Each layer, excluding the last one, is followed by an ELU \citep{Clevert.FastAccurateDeep.2016} activation function. We apply batch normalization \citep{Ioffe.BatchNormalizationAccelerating.2015} before the input layer.

\paragraph{Dataset vs. Batch Normalization} We do not perform any normalization on the whole dataset before training. This is driven by the fact that we want to use the form of the equation for analysis, debugging or gaining insights. Dataset normalization makes the feature less interpretable by, de facto, changing the units in which they are measured. Moreover, such normalization might make certain invariances more difficult to detect. Translational or scale invariances are present in many physical systems and, in fact, have been used to discover closed-form expressions from data \citep{Udrescu.AIFeynmanPhysicsinspired.2020}. Consider equation $(x_2 - x_1)^2$. The value of the expression does not depend directly on the values of $x_1$ and $x_2$ but rather on their difference $x_2 - x_1$. Detecting this relationship is important for both creating equations with interpretable terms and for pruning the search space. As we tend to use a consistent and familiar set of units, we want to capitalize on that as much as we can. However, features on different scales make neural networks (and other machine learning algorithms) notoriously difficult to train. That is why we perform batch normalization before passing the data to a shape function. That allows to perform a series of binary operations in the original units before a shape function is applied. This is what happens in the example in Section \ref{sec:experiments}, where given the energy and the mass of the substance, their ratio (energy per 1 gram) is discovered to be a more meaningful feature.

\subsection{Pseudocode and a Diagram}

\paragraph{Block Diagram} The training procedure for SHAREs implemented with symbolic regression and neural networks is depicted in \cref{fig:share_diagram}.

\begin{figure}[h!]

\centering
  \includegraphics[trim={0cm 3cm 8.5cm 0cm},clip,width=\textwidth]{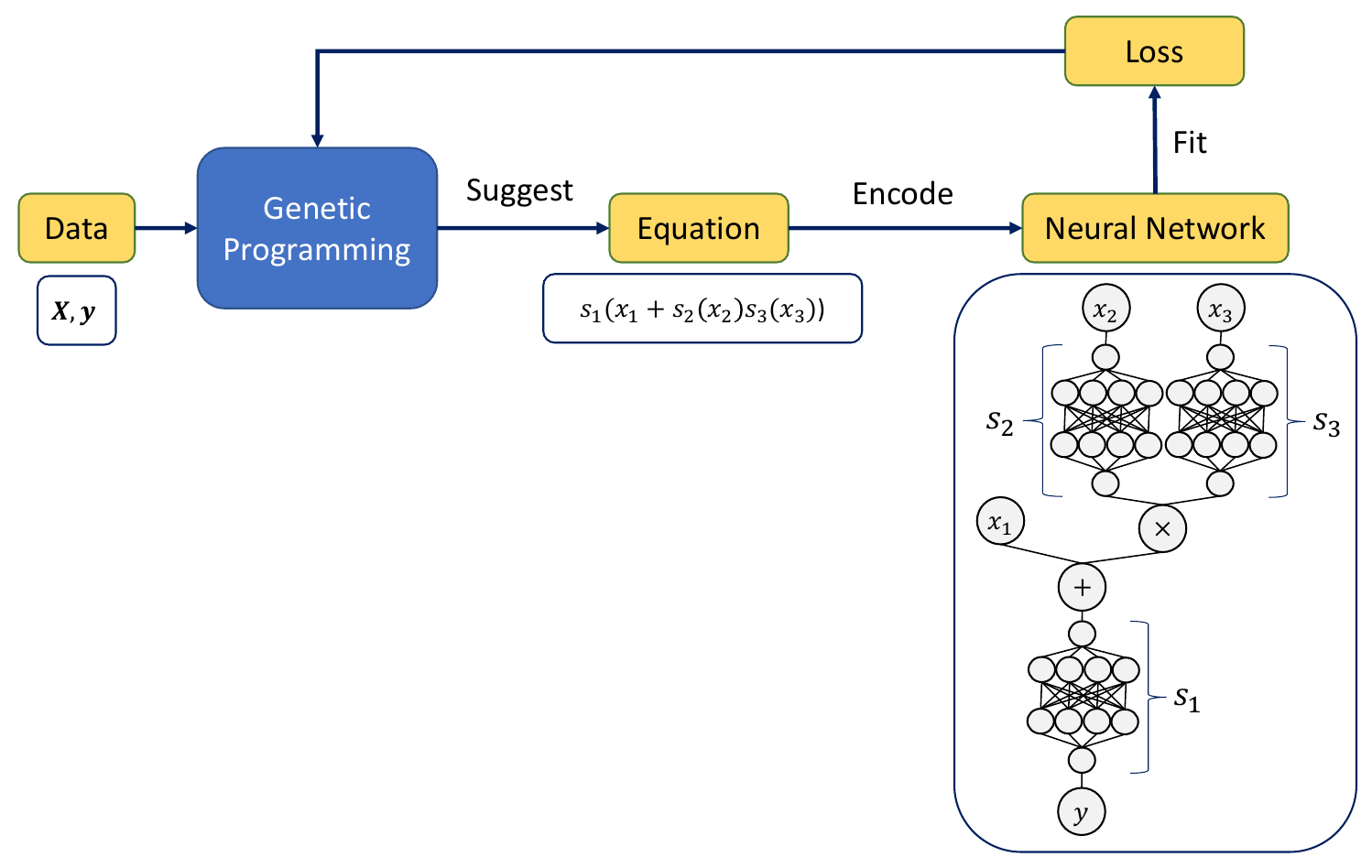}
  \caption{\small This figure shows a block diagram depicting our implementation of SHAREs}
  \label{fig:share_diagram}
\end{figure}

\paragraph{Pseudocode} The pseudocode for SHAREs implemented with symbolic regression and neural networks is described in \cref{alg:share}.

\begin{algorithm}
\caption{\small SHARE implemented using symbolic regression and neural networks}\label{alg:share}
\begin{algorithmic}
\Require Data $\bm{X}$, $\bm{y}$
\Require Symbolic regression optimization algorithm $\mathcal{O}_{\text{symbolic}}$
\Require Gradient-based optimization algorithm $\mathcal{O}_{\text{gradient}}$
\Ensure SHARE
\Procedure{Loss}{$f_e$}
\State Encode expression $f_e$ as a neural network $f$
\State $f \gets \mathcal{O}_{\text{gradient}}\left(||\bm{y}-f(\bm{X})||_2^2\right)$
\State \textbf{return} $||\bm{y}-f(\bm{X})||_2^2$
\EndProcedure
\State $f_e = \mathcal{O}_{\text{symbolic}}(\textsc{Loss})$
\State \textbf{return} $f_e$
\end{algorithmic}
\end{algorithm}

\section{EXPERIMENTS}
\label{app:experiments}

\subsection{Hyperparameters}

\paragraph{gplearn} Gplearn hyperparameters used for experiments are presented in Table \ref{tab:gplearn_hyperparameters}.

\begin{table}[h!]
\caption{\small Gplearn hyperparameters used in the experiments.}
\vspace{2mm}
\label{tab:gplearn_hyperparameters}
\centering
\begin{tabular}{ll}
\toprule
Hyperparameter & Value \\
\midrule
Population size & 500 (\cref{sec:experiments}) or 100 (\cref{app:additional_experiments})   \\
Generations & 10 \\
Tournament size & 10 \\
Function set & $+,\times,\div$, \textit{shape} \\
Constant range & None \\
p\_crossover & 0.4 \\
p\_subtree\_mutation & 0.2 \\
p\_point\_mutation & 0.2 \\
p\_hoist\_mutation & 0.05 \\
p\_point\_replace & 0.2 \\
Parsimony coefficient & 0.0 \\
\bottomrule
\end{tabular}
\end{table}

\paragraph{Optimization of Shape Functions}

Hyperparameters used for optimizing the shape function are presented in Table \ref{tab:optim_hyperparameters}.

\begin{table}[h!]
\caption{\small Hyperparameters used in shape function optimization.}
\vspace{2mm}
\label{tab:optim_hyperparameters}
\centering
\begin{tabular}{ll}
\toprule
Hyperparameter & Value \\
\midrule
Algorithm & Adam \citep{Kingma.AdamMethodStochastic.2017} \\
Maximum num. of epochs & 1000 (\cref{sec:experiments}) or 200 (\cref{app:additional_experiments}) \\
Learning rate & Tuned automatically for each equation \\
Weight decay & 0.0001 \\
\bottomrule
\end{tabular}
\end{table}

\paragraph{PySR} PySR hyperparameters used in the experiments are presented in Table \ref{tab:pysr_hyperparameters}.

\begin{table}[h!]
\caption{\small PySR hyperparameters.}
\vspace{2mm}
\label{tab:pysr_hyperparameters}
\centering
\begin{tabular}{ll}
\toprule
Hyperparameter & Value \\
\midrule
Binary operations & $+,-,\times,\div$ \\
Unary operators & $\log,\exp,\cos$ \\
maxsize & 25 (\cref{tab:issues_sr_results}) or 40 (\cref{tab:results_temperature_sr}) \\
populations & 15 (\cref{tab:issues_sr_results}) or 30 (\cref{tab:results_temperature_sr}) \\
niterations & 400 \\
population\_size & 33 (\cref{tab:issues_sr_results}) or 50 (\cref{tab:results_temperature_sr}) \\
\bottomrule
\end{tabular}
\end{table}

\paragraph{EBM} EBM hyperparameters used in the experiments in \cref{sec:issues_gams} are presented in Table \ref{tab:ebm_hyperparameters}. Hyperparameter ranges used for tuning in experiments in \cref{app:additional_experiments} are shown in \cref{tab:ebm_hyperparameter_ranges}.

\begin{table}[h!]
\caption{\small EBM hyperparameters (\cref{sec:issues_gams} and \cref{sec:experiments}).}
\vspace{2mm}
\label{tab:ebm_hyperparameters}
\centering
\begin{tabular}{ll}
\toprule
Hyperparameter & Value \\
\midrule
max\_bins & 256 \\
max\_interaction\_bins & 32 \\
binning & quantile \\
interactions & 0 or 3 \\
outer\_bags & 8 \\
inner\_bags & 0 \\
learning\_rate & 0.01 \\
validation\_size & 0.15 \\
early\_stopping\_rounds & 50 \\
early\_stopping\_tolerance & 0.0001 \\
max\_rounds & 5000 \\
min\_samples\_leaf & 2 \\
max\_leaves & 3 \\
\bottomrule
\end{tabular}
\end{table}

\begin{table}[h!]
\caption{\small EBM hyperparameter ranges for tuning (\cref{app:additional_experiments}).}
\vspace{2mm}
\label{tab:ebm_hyperparameter_ranges}
\centering
\begin{tabular}{ll}
\toprule
Hyperparameter & Value \\
\midrule
max\_bins & Integer from $[3,256]$ \\
outer\_bags & Integer from $[4,16]$ \\
inner\_bags & Integer from $[0,8]$ \\
learning\_rate & Float (log) from $[1e-3,1e-1]$ \\
validation\_size & Float from $[0.1,0.3]$ \\
min\_samples\_leaf & Integer from $[1,10]$ \\
max\_leaves & Integer from $[2,10]$ \\
\bottomrule
\end{tabular}
\end{table}

\paragraph{XGBoost} XGBoost hyperparameter used in the experiments in \cref{sec:issues_gams} are presented in \cref{tab:xgb_hyperparameters}. Hyperparameter ranges used in \cref{app:additional_experiments} are presented in \cref{tab:xgb_hyperparameter_ranges}.

\begin{table}[h!]
\caption{XGBoost hyperparameters (\cref{sec:issues_gams})}
\vspace{2mm}
\label{tab:xgb_hyperparameters}
\centering
\begin{tabular}{ll}
\toprule
Hyperparameter & Value \\
\midrule
subsample & 1 \\
colsample\_bytree & 1 \\
colsample\_bylevel & 1 \\
gamma & 0 \\
booster & gbtree \\
learning rate & 0.3 \\
max\_depth & 6 \\
n\_estimators & 100 \\
reg\_alpha & 0 \\
reg\_lambda & 1 \\
scale\_pos\_weights & 1 \\
\bottomrule
\end{tabular}
\end{table}

\begin{table}[h!]
\caption{XGBoost hyperparameter ranges for tuning (\cref{app:additional_experiments}).}
\vspace{2mm}
\label{tab:xgb_hyperparameter_ranges}
\centering
\begin{tabular}{ll}
\toprule
Hyperparameter & Value \\
\midrule
subsample & Float from $[0.2,1.0]$ \\
colsample\_bytree & Float from $[0.2,1.0]$ \\
gamma & Float (log) from $[1e-3,1e1]$ \\
learning rate & Float (log) from $[1e-3,1.0]$ \\
max\_depth & Integer from $[1,10]$ \\
n\_estimators & Integer (log) from $[10,1000]$ \\
reg\_alpha & Float (log) from $[1e-3,1e1]$ \\
reg\_lambda & Float (log) from $[1e-3,1e1]$ \\
\bottomrule
\end{tabular}
\end{table}

\paragraph{PyGAM} PyGAM hyperparameter ranges used in experiments in \cref{app:additional_experiments} are shown in \cref{tab:pygam_hyperparameter_ranges}.

\begin{table}[h!]
\caption{PyGAM hyperparameter ranges for tuning (\cref{app:additional_experiments}).}
\vspace{2mm}
\label{tab:pygam_hyperparameter_ranges}
\centering
\begin{tabular}{ll}
\toprule
Hyperparameter & Value \\
\midrule
lam & Float (log) from $[1e-3,1e1]$ \\
max\_iter & Integer (log) from $[10,100]$ \\
n\_splines & Integer from $[1,30]$ \\
\bottomrule
\end{tabular}
\end{table}

\subsection{Data Generation}

\paragraph{Risk Scores Dataset} \cite{hastie1995generalized} describes a process of applying GAMs to a dataset of patients taking part in a clinical trial for the treatment of node-positive breast cancer. In the paper, three shape functions that resulted from this fitting are presented (Figure 1). To generate the risk scores data we use for experiments in Section \ref{sec:experiments}, we use these plots to create similar-looking functions using BSplines. We sample uniformly each of the covariates from their corresponding ranges, i.e., $x_{nodes} \in (0,50)$, $x_{age} \in (45,70)$, and $x_{bmi} \in (17,45)$. We then apply the created shape functions to these covariates and add their values together. We create 200 samples. Half of them is used for training and the other half for validation. The code for creating this dataset is included in the paper repository. 

\paragraph{Temperature Dataset} Data used in temperature experiments in Section \ref{sec:experiments} was generated by simulating the temperature of water based on the laws of physics and constants shown in Table \ref{tab:water_properties}. $m$ was uniformly sampled from $(1,4)$ and $t_0$ was sampled uniformly from $(-100,0)$. The energy $E$ was calculated by first sampling energy per mass uniformly from $(1,800)$ and then multiplying it by the mass $m$. The temperature (label) range (-100 to 250) was chosen to capture significant non-linear phenomena. Uniform energy sampling was avoided due to its tendency to either under-represent
temperatures over 100 or, if the energy range is increased, result in excessively high temperatures (more than few thousands) dominating the training process. This is due to the high heat of vaporization. We draw 2000 samples. Half of them is used for training and the other half for validation. The code for creating this dataset is included in the paper repository. 

\subsection{Additional Results}
\label{app:additional_results}

\paragraph{GA$^2$M Fitted to the Temperature Dataset} We present the shape functions of GA$^2$M fitted to the temperature dataset in Section \ref{sec:experiments}. 

\begin{figure}[h!]
\centering
  \includegraphics[scale=0.5]{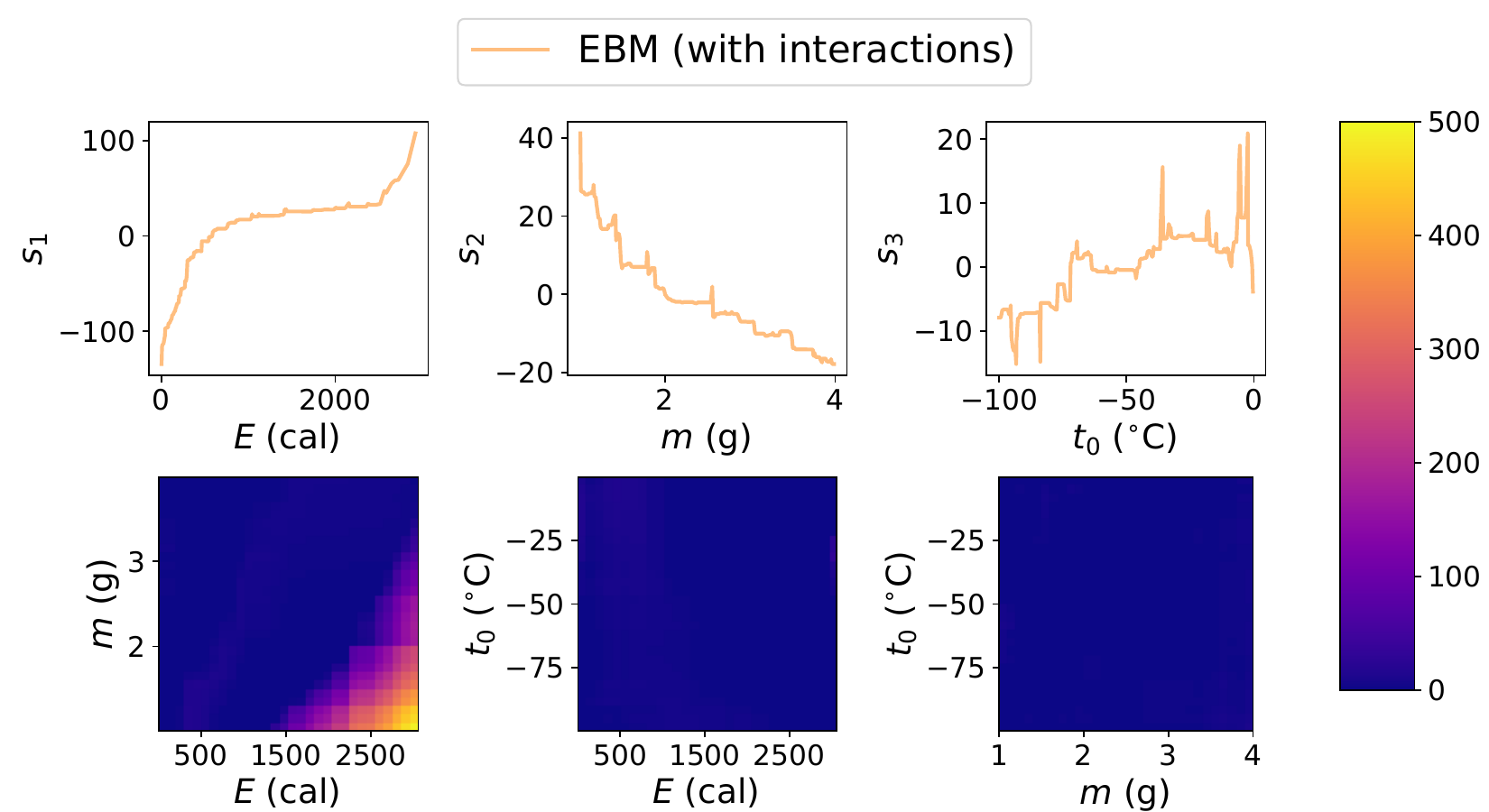}
  \caption{\small Shape functions from the GA$^2$M fitted to the temperature dataset}
  \label{fig:temperature_ebm_interactions}
\end{figure}

\paragraph{Equations From PySR Fitted to the Temperature Dataset.} We present the equations found by PySR when fitted to the temperature dataset in Section \ref{sec:experiments}. These equations did not fit into the table with the results.

\begin{equation}
\label{eq:temp_eq_10}
\begin{minipage}{0.8\linewidth} \vspace{-1em} \begin{dmath*} y = 74.0 \cos{\left(\log{\left(\frac{0.739 E}{m} + 19.1 \right)} \right)} + 39.1 + \frac{t_0}{E} \end{dmath*} \end{minipage}
\end{equation}

\begin{equation}
\label{eq:temp_eq_14}
\begin{minipage}{0.8\linewidth} \vspace{-1em} \begin{dmath*} y = \frac{t_0}{\log{\left(E \right)}} + 65.0 \cos{\left(\log{\left(\frac{E}{m} + 41.5 \cos{\left(\frac{0.0275 E}{m} \right)} \right)} \right)} + 50.9 \end{dmath*} \end{minipage} 
\end{equation}

\begin{equation}
\label{eq:temp_eq_19}
\begin{minipage}{0.8\linewidth} \vspace{-1em} \begin{dmath*} y = - 1.63 e^{e^{\cos{\left(\frac{0.0101 E}{m} \right)}}} + 58.6 \cos{\left(\log{\left(\frac{0.653 E}{m} + 27.1 \cos{\left(\frac{0.0261 E}{m} \right)} \right)} \right)} + 67.6 \end{dmath*} \end{minipage} 
\end{equation}

\begin{equation}
\label{eq:temp_eq_24}
 y = \frac{x_{2}}{\log{\left(x_{0} \right)}} - 1.72 e^{e^{\cos{\left(\frac{0.0103 E}{m} \right)}}} + 56.3 \cos{\left(\log{\left(\frac{0.582 E}{m} + 31.7 \cos{\left(\frac{0.0214 E}{m} + 0.925 \right)} \right)} \right)} + 80.1
\end{equation}

\subsection{Additional Experiments}
\label{app:additional_experiments}

We performed additional experiments on a few small classification datasets from the PMLB database \citep{Olson.PMLBLargeBenchmark.2017}. The results are shown in \cref{tab:additional_results}

\begin{table}[ht]
  \caption{\small AUC scores on small PMLB classification datasets. The higher the score, the better. Linear denotes standard logistic regression. GAM-S is a type of GAM that uses smooth splines as shape functions (as implemented in \texttt{pygam} \citep{Serven.PyGAMGeneralizedAdditive.2018}). EBM-1 and EBM-2 are Explainable Boosting Machines (without and with pairwise interactions) as implemented in \texttt{InterpretML} package \citep{Nori.InterpretMLUnifiedFramework.2019}.}
  \vspace{2mm}
  \label{tab:additional_results}
  \centering
\begin{tabular}{lllll}
\toprule
{} &         banana &         cancer &         breast &       diabetes \\
\midrule
Linear  &  0.555 (0.000) &  0.595 (0.000) &  0.997 (0.000) &  0.850 (0.000) \\
GAM-S   &  0.804 (0.000) &  0.650 (0.000) &  0.992 (0.000) &  0.862 (0.000) \\
EBM-1   &  0.800 (0.001) &  0.645 (0.014) &  0.995 (0.001) &  0.857 (0.004) \\
EBM-2   &  0.957 (0.001) &  0.651 (0.007) &  0.997 (0.001) &  0.847 (0.003) \\
XGBoost &  0.800 (0.002) &  0.652 (0.028) &  0.995 (0.002) &  0.847 (0.002) \\
SHARE   &  0.915 (0.000) &  0.668 (0.032) &  0.998 (0.001) &  0.846 (0.010) \\
\bottomrule
\end{tabular}
\end{table}

We performed additional experiments on the temperature dataset, assessing performance under varying noise levels. As illustrated in \cref{fig:robustness}, SHAREs demonstrate comparable robustness to noise as XGBoost (note, label range is (-100, 250)). Furthermore, minimal variance in results across different initialization (indicated by error bars) underscores SHAREs’ consistency.

\begin{figure}[h!]
\centering
  \includegraphics[scale=0.5]{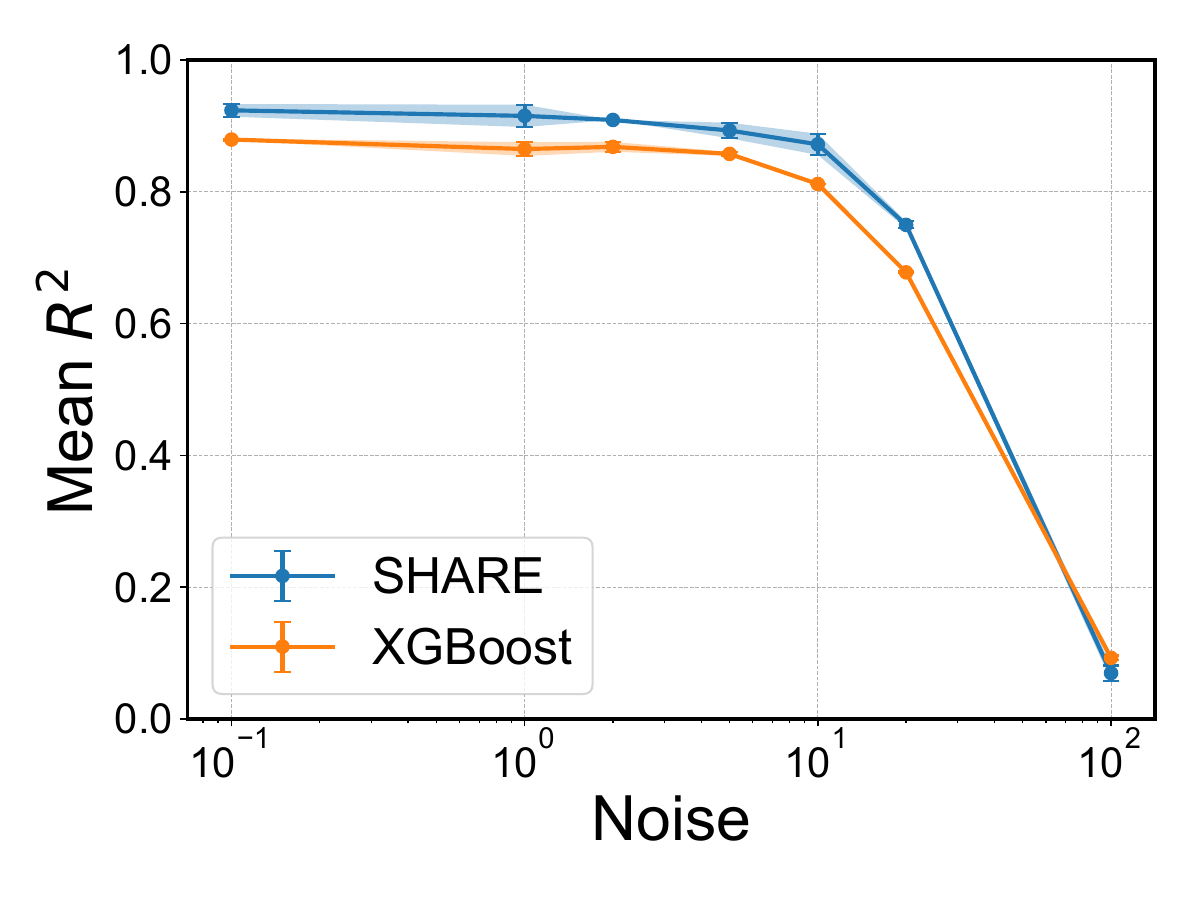}
  \caption{Performance vs. noise on the temperature dataset}
  \label{fig:robustness}
\end{figure}

\subsection{Computation Time}
The experiments were performed on 12th Gen Intel(R) Core(TM) i7-12700H with 64 GB of RAM. The average times for experiments in \cref{app:additional_experiments} are shown in \cref{tab:times}. The total time for all experiments is around 15 hours.

\begin{table}[ht]
  \caption{\small Average computation times (in seconds) for each of the methods and datasets in \cref{app:additional_experiments}.}
  \vspace{2mm}
  \label{tab:times}
  \centering
\begin{tabular}{lllll}
\toprule
{} &  banana &   cancer &   breast & diabetes \\
\midrule
Linear  &    0.01 &     0.00 &     0.00 &     0.01 \\
GAM-S   &    0.09 &     0.47 &     0.17 &     0.09 \\
EBM-1   &    0.57 &     0.04 &     0.08 &     0.08 \\
EBM-2   &    6.66 &     0.20 &     0.53 &     1.17 \\
XGBoost &    0.09 &     0.01 &     0.02 &     0.05 \\
SHARE   &  874.04 &  2321.60 &  7535.68 &  6133.11 \\
\bottomrule
\end{tabular}
\end{table}

\subsection{Software Used}
We use PySR \citep{pysr} to run symbolic regression experiments.

We use the implementation of EBM \citep{Lou.IntelligibleModelsClassification.2012,Lou.AccurateIntelligibleModels.2013} available in the InterpretML package \citep{Nori.InterpretMLUnifiedFramework.2019}.

We use PMLB \citep{Olson.PMLBLargeBenchmark.2017} package to access the Feynman Symbolic Regression Dataset.

\subsection{Licenses}
The licenses of the software used in this work are presented in Table \ref{tab:licenses}
\begin{table}[ht]
  \caption{\small Software used and their licenses}
  \vspace{2mm}
  \label{tab:licenses}
  \centering
  \begin{tabular}{ll}
    \toprule
   Software & License  \\
    \midrule
    gplearn & BSD 3-Clause ``New" or ``Revised" License\\
    scikit-learn & BSD 3-Clause ``New" or ``Revised" License\\
    numpy & liberal BSD license\\
    pandas & BSD 3-Clause ``New" or ``Revised" License\\
    scipy & liberal BSD license \\
    python & Zero-Clause BSD license \\
    PySR & Apache License 2.0 \\
    interpret & MIT License \\
    pmlb & MIT License \\
    pytorch & BSD-3 \\
    pytorch lightning & Apache License 2.0 \\
    tensorboard & Apache License 2.0 \\
    py-xgboost & Apache License 2.0\\
    pyGAM & Apache License 2.0 \\
    \bottomrule
  \end{tabular}
\end{table}

\section{DISCUSSION}
\label{app:discussion}

\subsection{Equivalent Solutions}
As SHAREs are defined by their symbolic representation, it is possible that there are two different symbolic expressions that describe the same equation (especially as shape functions are flexible). We address this problem in three ways that tackle three types of equivalence relations:
\begin{enumerate}[noitemsep,nolistsep,leftmargin=*]
    \item In our implementation, we restrict the set of binary operations to $\{+,\times,\div\}$ (\cref{tab:gplearn_hyperparameters}). As subtraction (``$-$") is not included, we do not get the equivalence $s_1(x_1) + s_2(x_2) = s_1(x_1) - s'_2(x_2)$ by $s'_2 = -s_2$.
    \item  A common way two mathematical expressions can be equivalent is through the \textit{distributive property}, i.e., $x_1 \times (x_2 + x_3) = x_1 \times x_2 + x_1 \times x_3$. However, thanks to our definition of transparency, the second expression will never appear in our search space (because the binary operators need to be disjoint).
    \item We do not allow constants in transparent SHAREs to prevent equivalence of the type: $s(x) = s'(x\times a)$ for any $a\in \mathbb{R} \setminus \{0\}$ and $s'(x) = s(\frac{x}{a})$ 
\end{enumerate}

Another type of equivalence relation can arise from the use of exponential and logarithmic functions. For instance, $s_0(s_1(x_1)+s_2(x_2))$ can be represented as $s'_0(s'_1(x_1) \times s'_2(x_2))$ by taking $s'_1 = e^{s_1}, s'_2 = e^{s_2}, s_0'(x) = s_0(\log(x))$. However this other form of the expression require more shape functions. Thus, as they have the same predictive power, the user can choose the one with a smaller number of shape functions for better understanding.

\subsection{Limitations of the Current Implementation}
\label{app:limitations_implementation}
The current implementation of SHAREs is time-intensive and thus does not scale to bigger datasets---thus, it is not a main contribution of our paper. We hope that future work will address the limitations of our implementation and will enhance the ability to fit SHAREs to even larger and more complex datasets.

The main bottleneck comes from nested optimization and the necessity of fitting a separate neural network for every equation. Nevertheless, we want to highlight a few things we have done to make this problem more tractable:
\begin{enumerate}[noitemsep,nolistsep,leftmargin=*]
\item The constants are \textit{not} optimized by random mutations but implicitly by fitting the shape functions using gradient descent.
\item By considering only transparent SHAREs, we efficiently reduce the search space of expressions. By \cref{prop:properties}, the size of a SHARE is bounded by $4n-2$, linear in the number of variables.
\item During training, we cache the scores for found expressions so that they can be retrieved if they appear once again during the evolution.
\end{enumerate}

\subsection{Raw Variable Combinations and Unit Transformations}
Variables in the dataset are often expressed in certain units. These units often provide a lot of information and are frequently used in SR algorithms. Either explicitly \citep{Udrescu.AIFeynmanPhysicsinspired.2020} or implicitly by assuming that certain arithmetic operations make sense. For instance, adding two variables makes little sense if they are not measured in the same units. Of course, units may easily be changed by an affine transformation, but such transformations increase the length of the equation. As many SR algorithms penalize based on the length of the expression, \textit{the change of units changes the score of the expression}. On the other hand, the datasets often combine observations of very different phenomena that might be measured in wildly different units.

So, we want to use the information about units when possible, but we do not want to depend on it. This was one of the motivations behind SHAREs. They allow the use of binary operations on the raw variables, capitalizing on the units in which they were described, but also allow the first pass of a variable through a shape function that can transform the variable. In certain cases (such as an affine transformation), this corresponds to a change of units.

SHAREs, unlike GAMs, can accommodate combinations of raw variables (e.g., $\frac{E}{m}$, $x_1-x_2$), leading to potential meaningful constructs like energy per unit mass or distance between points. Contrasting with symbolic regression, SHAREs are less unit-sensitive, as variations in units (e.g., $^{\circ}$C, $^{\circ}$F, K) get absorbed within the shape functions and do not lead to longer expressions.

In the current implementation, these relationships are learned directly from data. Future research direction may include ways of explicitly providing information about units.

\subsection{Related works}
\label{app:related_works}

\paragraph{Complexity Metrics Used in Symbolic Regression} Most of the metrics used in symbolic regression are based on the ``size" of the equation. That includes the number of terms \citep{Stephens.GplearnGeneticProgramming.2022}, the depth of a tree \citep{pysr,Petersen.DeepSymbolicRegression.2021}, and the description length \citep{Udrescu.AIFeynmanParetooptimal.2021}. Pretrained methods often control the complexity of the generated equations by constraining the training set using the above methods \citep{Biggio.NeuralSymbolicRegression.2021}. Methods that directly represent the equation as a neural network (with modified activation functions) employ sparsity in the network weights \citep{Sahoo.LearningEquationsExtrapolation.2018}. Although these metrics are often correlated with the difficulty of understanding a particular equation, size does not always reflect the equation’s complexity as it disregards its semantics. Some approaches try to address this issue. \cite{Vladislavleva.OrderNonlinearityComplexity.2009} introduces a metric based on ``order of nonlinearity" with the assumption that nonlinearity measures the complexity of the function. Although simpler models tend to be more linear, and nonlinearity may be important for generalization properties, it is not clear how it aids in model understanding. Similarly, \cite{Vanneschi.MeasuringBloatOverfitting.2010} uses curvature as an inspiration for their metric. Although curvature may be well-suited for characterizing bloat and overfitting, it does not directly relate to how the model is understood. \cite{Kommenda.ComplexityMeasuresMultiobjective.2015} introduces a metric that is supposed to reflect the difficulty in understanding an equation. However, the exact rules chosen to calculate the complexity seem arbitrary. For instance, it is not clear why applying some well-known functions (such as $\sin$, $\log$) increases the complexity in a widely different manner than squaring or taking a square root. We also dispute the motivating example that demonstrates that $e^{\sin \sqrt x}$ is nearly 4000 times more complex than $7x^2 + 3x + 5$. It is not clear under what assumptions such a result would be intuitive.

\paragraph{NeuroSymbolic AI} As our implementation of SHAREs contains both symbolic and neural components, it can be classified as an example of Neurosymbolic AI \citep{Hitzler.NeuroSymbolicArtificialIntelligence.2021,Garcez.NeurosymbolicAI3rd.2023} that tries to combine symbolic/reasoning elements with neural networks. Neurosymbolic AI encompasses Neurosymbolic Learning Algorithms and Neurosymbolic Representations \citep{Chaudhuri.NeurosymbolicProgramming.2021}. Examples of the former may include recently proposed algorithms for symbolic regression that utilize large pre-trained transformers \citep{Biggio.NeuralSymbolicRegression.2021, DAscoli.DeepSymbolicRegression.2022, Kamienny.EndtoendSymbolicRegression.2022} or where a symbolic loss is used to regularize a deep learning model as in Physic Informed Neural Networks (PINNs) \citep{Raissi.PhysicsinformedNeuralNetworks.2019}. We note, however, that PINNs are inherently a black box model where a physical constraint is used as an inductive bias during training. This significantly differs from our approach, where a human can trace the exact computation steps for the prediction. Thus SHAREs are an example of a neurosymbolic representation, where the model itself is a fusion of symbolic language and neural networks. Examples of such models have also been used in other areas \citep{Valkov.HOUDINILifelongLearning.2018,Cheng.ControlRegularizationReduced.2019}.

\subsection{Meaning of the Word Transparent}
In our paper, we have used a widely accepted term: \textit{transparent} \citep{BarredoArrieta.ExplainableArtificialIntelligence.2020}. However, other terms could also be used. That includes: \textit{inherently interpretable} \citep{Rudin.StopExplainingBlack.2019}, \textit{intrinsically interpretable} \citep{Vollert.InterpretableMachineLearning.2021}, \textit{intelligible} \citep{Lou.IntelligibleModelsClassification.2012, Lou.AccurateIntelligibleModels.2013}, or \textit{white boxes} \citep{Nori.InterpretMLUnifiedFramework.2019}. For this paper, we assume all of these terms refer to the same class of models.

\subsection{Complexity of Univariate Functions.} 
Rule 1 in \cref{sec:SHAREs_transparency} is based on the assumption that any univariate function can be understood by plotting it. This assumption is tacitly made in many works on GAMs that introduce algorithms producing sometimes very complicated shape functions \citep{Lou.IntelligibleModelsClassification.2012,Caruana.IntelligibleModelsHealthCare.2015,Agarwal.NeuralAdditiveModels.2021}. However, in certain scenarios this assumption is too strong and some recent works introduce GAMs that take into account the complexity of shape functions \citep{Abdul.COGAMMeasuringModerating.2020}. Rule 1 can be modified to include these stronger assumptions. In our implementation, we use neural networks, which are known for their expressiveness, and we took a few design choices that made them simple in practice. We believe their simplicity is a result of
\begin{itemize}[noitemsep,nolistsep,leftmargin=*]
\item weight decay to promote smaller weights,
\item choosing a smooth ELU activation function instead of ReLU or ExU, \citep{Agarwal.NeuralAdditiveModels.2021} that encourage more jagged functions,
\item employing early stopping when training the neural networks to prevent over-fitting,
\item optimizing using backprop - models learned that way were shown to be biased toward smooth solutions \citep{Caruana.OverfittingNeuralNets.2000}.
\end{itemize}
Our formulation of SHAREs allows for different kinds of shape functions, and thus, there is a way to enforce simplicity by using splines instead of neural networks. This assumes that our definition of simplicity concerns the smoothness and the number of inflection points. By choosing the number of knots, we can choose the level of simplicity we desire.

\subsection{Rules: Practical Example.}
We propose a practical example when Rule 1 and Rule 2 are satisfied. Let us assume that our definition of transparency concerns (maybe, among other things) understanding the set of possible values the expression outputs given a particular set of inputs. We characterize the set of inputs by specifying an interval for every feature. That is, we are interested in the range of values of $f(x_1, \ldots, x_n)$ where $x_1\in[a_1,b_1],\ldots x_n\in [a_n,b_n]$. Let us fix the input intervals. Let us assume that an expression $f(x_1,\ldots,x_n)$ is transparent if we can easily find an interval $[c,d]$ that is an image of this function for the specified inputs. Clearly, $x_i$ is transparent. But so is $f(x_i)$ for any univariate $f$ as long as we are able to characterize the extrema of $f$ at interval $[a_i,b_i]$. Let us assume that a given $f$ is transparent, i.e., we know its image is an interval $[c,d]$. Then, we can compute the image of $s\circ f$ (as long as we are able to characterize the extrema of $s$ at interval $[c,d]$). Thus, we can see how Rule 1 conforms to our notion of transparency. As Rule 2 requires the functions to be transparent and have disjoint sets of arguments, we can easily calculate the range of the whole model by applying the interval arithmetic. The example above demonstrates that our rules have practical application for a broad set of shape functions---the only requirements are being continuous and having easily identifiable extrema at given intervals (which can be found by plotting the function).

\subsection{Computational Complexity of the Implementation}
Symbolic regression algorithms are known to be computationally intensive, and the problem itself is known to be NP-hard \citep{Virgolin.SymbolicRegressionNPhard.2022}. As our model class also requires discrete and continuous optimization (for the structure and the shape functions, respectively), the task is challenging. We performed several actions (described in \cref{app:limitations_implementation}) to speed up the optimization. The algorithm consists of two loops. The outer loop searches through the space of expression trees (where the shape functions are represented as placeholders), and the inner loop optimizes the shape function using gradient descent. This procedure can be phrased as a neural architecture search with the use of genetic programming \citep{Liu.SurveyEvolutionaryNeural.2023}. In fact, certain implementations of symbolic regression also perform bi-level optimization. For instance, in the algorithm developed by \cite{pysr} (PySR), the search is performed over the expression trees, and then the constants are optimized for each expression. The main overhead in our implementation comes from having more parameters to optimize in the inner loop as we use univariate neural networks. However, that problem can be tackled by choosing flexible shape functions with smaller numbers of parameters.

\subsection{Expressivity of Transparent SHAREs}
Transparent SHAREs are (by definition) more expressive than GAMs as their search space is bigger. As noted, the Kolmogorov-Arnold theorem \citep{kolmogorov1957representation} also shows that univariate functions (in theory) should be sufficient to represent any continuous function. In addition, we argue that the assumptions on the structural form of the equation (Rule 2) do not significantly reduce SHARE's ability to model practical complex functions. To demonstrate it, we turn to the current state-of-the-art models we use to model the world - mathematical equations. 86 out of 100 equations from the Feynman Symbolic Regression database \citep{Udrescu.AIFeynmanPhysicsinspired.2020} satisfy Rule 2. We believe that if Transparent SHAREs are expressive enough to fit the majority of equations used in physics, then they are likely to be sufficient to model other real-world phenomena. We also demonstrate it in the additional experiment we performed on real-world datasets (see Appendix \ref{app:experiments}).

\subsection{Using Transparency Rules in the Standard Symbolic Regression}
By definition (see \cref{rem:every_gam}), closed-form expressions can be considered SHAREs if we choose the set to be a finite set of some well-known function. The second rule, thus, can be applied to standard SR. Although the current symbolic regression algorithms are often penalized on the length of the program \citep{Stephens.GplearnGeneticProgramming.2022} or similar length-based criteria \citep{Udrescu.AIFeynmanParetooptimal.2021}, we are not convinced this is the best way to elicit transparency (see \cref{app:related_works}). As we mentioned in \cref{sec:SHAREs_transparency}, 86\% of equations from the Feynman Symbolic Regression Database \citep{Udrescu.AIFeynmanParetooptimal.2021} satisfies Rule 2, so the trade-off between performance and interpretability may be worth it.

We note, however, that applying other properties from \cref{def:transparent_SHARE} to SR (shape functions cannot be recursively composed, no numeric constants) is not a good idea because the set of shape functions is not expressive enough (it is fixed and not learned from the data) - it does not satisfy the first criterion.

\subsection{Model Selection}
When it comes to model selection among interpretable models, the issue is how to balance the trade-off between interpretability and accuracy. Thanks to our definition of transparency, all found models have some guaranteed interpretability (this is not true about the equations found by symbolic regression). However, we recognize that even among transparent SHAREs, some of the models might take less time to understand than others - based on how many shape functions are involved. We do not want to impose a specific strategy on how a user should decide which equation to choose. Instead, as has been done in SR literature \citep{Udrescu.AIFeynmanParetooptimal.2021}, we provide a Pareto-frontier of the found equations. Each of the expressions in the table is Pareto-optimal, i.e., there is no other function that is both simpler (in terms of the number of shape functions) and more accurate.

In certain settings where a more automatic model selection is required, techniques from symbolic regression literature can be used to deterministically choose the ``best" equation given the Pareto-frontier. This might involve a simple strategy based on a linear penalty due to the number of shape functions \citep{Stephens.GplearnGeneticProgramming.2022} or more complicated methods based on the accuracy of the most accurate model and the rate of change of the loss as we move along the Pareto-frontier \citep{pysr}.

\subsection{Scalability of Rule-Based Transparency}
Maintaining interpretability as the number of variables increases is one of the main challenges of all intrinsically interpretable methods. As the human brain has a limited capacity for reasoning about more than a few concepts simultaneously, we decided to define transparency so that we can reason about different parts of the model independently. With enough time, we can build intuition and understanding from the ground up, each time combining the knowledge about two smaller models into a bigger one. That is why we believe that our notion of transparency scales well as the number of variables increases. That means SHAREs remain interpretable even in higher-dimensional settings. The traditional size-based notion of transparency employed in symbolic regression does not have this property. As the number of variables increases, the size of the expression has to increase to accommodate them. Without any constraints similar to Rule 2, we either have to accept that SR cannot be used if the number of dimensions exceeds a certain (quite low) threshold, or we have to accept that equations remain interpretable even as they become very large.

\end{document}


%

%

\onecolumn
\aistatstitle{Instructions for Paper Submissions to AISTATS 2024: \\
Supplementary Materials}

\section{FORMATTING INSTRUCTIONS}

To prepare a supplementary pdf file, we ask the authors to use \texttt{aistats2024.sty} as a style file and to follow the same formatting instructions as in the main paper.
The only difference is that the supplementary material must be in a \emph{single-column} format.
You can use \texttt{supplement.tex} in our starter pack as a starting point, or append the supplementary content to the main paper and split the final PDF into two separate files.

Note that reviewers are under no obligation to examine your supplementary material.

\section{MISSING PROOFS}

The supplementary materials may contain detailed proofs of the results that are missing in the main paper.

\subsection{Proof of Lemma 3}

\textit{In this section, we present the detailed proof of Lemma 3 and then [ ... ]}

\section{ADDITIONAL EXPERIMENTS}

If you have additional experimental results, you may include them in the supplementary materials.

\subsection{The Effect of Regularization Parameter}

\textit{Our algorithm depends on the regularization parameter $\lambda$. Figure 1 below illustrates the effect of this parameter on the performance of our algorithm. As we can see, [ ... ]}

\vfill